\newcommand{\bs}{\backslash}
\newcommand{\bo}{[}
\newcommand{\bc}{]}
\newcommand{\ma}[1]{\textit{#1}}
\definecolor{gray80}{gray}{0.80}
\begin{document}

\title*{Logical Semantics, Dialogical Argumentation, and Textual Entailment} 
\author{Davide Catta,   Richard Moot, Christian Retoré} 
\institute{David Catta, Richard Moot, Christian Retoré \at LIRMM, Univ Montpellier, CNRS, Montpellier, France  e-mail:firstname.lastname@lirmm.fr} 

\maketitle

\abstract*{ 
In this chapter, we introduce a new dialogical system for first order classical logic which is close to natural language argumentation,   and we  prove its completeness with respect to usual classical validity. 
We combine our dialogical system with the Grail syntactic and semantic parser developed by the second author in order to address automated textual entailment, that is, we use it for deciding whether or not a sentence is a consequence of a short text.  This work --- which connects natural language semantics and  argumentation with dialogical logic ---  can be viewed as a step towards an inferentialist view of natural language semantics.} 

\abstract{ 
In this chapter, we introduce a new dialogical system for first order classical logic which is close to natural language argumentation,   and we  prove its completeness with respect to usual classical validity. 
We combine our dialogical system with the Grail syntactic and semantic parser developed by the second author in order to address automated textual entailment, that is, we use it for deciding whether or not a sentence is a consequence of a short text.  This work --- which connects natural language semantics and  argumentation with dialogical logic ---  can be viewed as a step towards an inferentialist view of natural language semantics.} 

\section{Introduction}
We propose an approach to the semantics of natural language inspired by the inferentialist theory of meaning. Inferentialism is a philosophical current developed in the 20th century from a certain reading of Wittgenstein's philosophical research \cite{Wittgenstein1953investigations}. As is well known, the Austrian philosopher declared that the meaning of a linguistic expression is not given  by its truth value but by its \emph{use} in a linguistic context. The inferentialists take up Wittgenstein's theory by claiming that the meaning of a sentence is given by its inferential use within language. That is, the meaning of a sentence lies in its  argumentative use, its justifications, its refutations and more generally its deductive relation to other statements. We formally develop this kind of approach to the theory of meaning using dialogical logic. Dialogical logic is an approach to formal semantics developed from Lorenzen's pioneering work in the 1950s \cite{lorenzen1978dialogische}. In dialogical logic the concept of validity is not defined starting from that of interpretation in a structure. On the contrary, a formula is considered valid if, and only if, there is an argumentative strategy that, if followed, allows those who claim the formula to win any argumentative dialogue that may arise when the formula is proferred.\\
The link with natural language semantics is obtained using type-logical grammars. Such grammars compute the possible meanings of a sentence viewed as logical formulae. In particular the \texttt{Grail} platform, a wide-scale categorial parser which maps French sentences to logical formulas \cite{Moot2017,Moot2015jlm,MootGrail2018}, will be presented. We then use \texttt{Grail} and dialogical logic to solve some examples of textual entailment from the FraCaS database \cite{fracas}.

This chapter is structured as follows. Section~2 introduces inferentialism and its motivations. Section~3 introduces dialogical logic for classical first order logic and dialogical validity. Section~4 gives a proof of the fact that the class of formulas that are dialogicaly valid is  equal to the class of formulas that are valid in the standard meaning of the term i.e.\ true in all interpretations.   Section~5 is an introduction to categorical grammars and \texttt{Grail}. We conclude with the applications of semantics to the problem of textual inference.

We then show applications of this semantics to the problem of textual entailment using examples taken from the FraCaS database. 

Hence this paper deepens and extends our first step in this direction ``inferential semantics and argumentative dialogues''  \cite{CPR2017jla}, mainly because of a strengthened link with natural language semantics and textual entailment: 
\begin{itemize} 
\item 
We do not require the formulas to be in a normal negative form as done in  \cite{CPR2017jla}. In this way the strategies correspond to derivations in a two-sided sequent calculus, i.e.\ with multiple hypothesis and multiple conclusions. 
\item 
We use Grail to connect natural language inference (or textual entailment) to argumentative dialogues: indeed Grail turns natural language sentences into logical formulas (delivered as DRS) --- while our previous work only assumed that sentences could be turned into logical formulas. 
\item This lead us closer to inferentialist semantics: a sentence $S$ can be interpreted as all argumentative dialogues (possibly expressed in natural language) whose conclusion is $S$ --- under assumptions corresponding to word meaning and to the speaker beliefs. 
\end{itemize}

\section{Inferentialism}
A problem with the standard view of both natural language semantics and of logical interpretations of formulas is that the models or possible worlds in which a sentence is true cannot be computed or even enumerated \cite{MootRetoreCompu}.
As far as pure logic is concerned there is an alternative view of meaning called \emph{inferentialism} \cite{Dummett1975-DUMWIA,Dummett1991-DUMTLB,Prawitz2012,Cozzo1994,Brandom2000}. Although initially inferentialism  took place within a constructivist view of logic \cite{Dummett1975-DUMWIA}, there is no necessary conceptual connection between accepting an inferentialist position and rejecting classical logic as explained in \cite{Cozzo1994}. 

As its name suggests \emph{inferentialism}  takes the inferential activity of an agent to be the primary semantic notion rather then truth.   According to this paradigm, the meaning of a sentence is viewed as the \emph{knowledge} needed to understand the sentence.

This view is clearly stated by Cozzo \cite{Cozzo1994}
\begin{quote}
A theory of meaning should be a theory of understanding. The meaning of an expression (word or  sentence) or of an utterance is what a speaker-hearer must know (at least implicitly) about that expression,  or that utterance, in order to understand it.
\end{quote}

This requirement has some deep consequences: as speakers are only able to store a finite amount of data,  the knowledge needed to understand the meaning of the language  itself should also be finite or at least recursively enumerable from a finite set of  data and rules.  Consequently, an inferentialist cannot agree with the Montagovian view  of the meaning of a proposition as the possible worlds in which the proposition is true. In particular because there is no finite way to enumerate the infinity of possible worlds nor to finitely enumerate the infinity of individuals and of relations in a single of those possible worlds. 

Let us present an example of the  knowledge needed
to understand a word. If it is a referring word like \ma{eat}, one should know what it refers to \ma{the action that someone eats something}, possibly some postulates related to this word like \emph{the eater is animated},\footnote{Our system is able to deal with metaphoric use, like the \textit{The cash machine ate my credit card.} see e.g. \cite{Retore2014types2013}.} and how to compose it with other words \ma{eat is a transitive verb, a binary predicate}; if it is a non referring word like \textit{which} one should know that it combines with a  sentence without subject (a unary predicate), and a noun (a unary predicate), and makes a conjunction of those two predicates.  


Observe that this knowledge is not required to be explicit for human communication. Most speakers would find difficult to explicitly formulate these rules, especially the grammatical ones. It does not mean that they do not possess this knowledge. 


An important requirement for a theory of meaning is that the speaker's knowledge can be  \emph{observed}, i.e.\  his knowledge can be observed in the interactions between the speaker(s),  the hearer(s) and the environment. This requirement is supported by the famous  argument against the \emph{private language} of Wittgenstein \cite{Wittgenstein1953investigations} 
This might be explained as follows. 
Imagine that two speakers have the same use of a sentence $S$ in all possible circumstances. Assume that one of the two speakers includes as part of the meaning of $S$  some ingredient  that cannot be observed. This ingredient has to be ignored when defining the knowledge needed to master the meaning of $S$. Indeed, according to the inferentialist view, 
a  misunderstanding that can neither be isolated nor observed should be precluded.

Another requirement  for a theory of meaning is the distinction between sense and force, on which we shall be brief. Since Frege, philosophy of language introduced the distinction between  the sense of the sentence and its force. The sense of a sentence is the propositional content conveyed by the sentence,  while its force is its \emph{mood} --- this use of the word ``mood'' is more general than its linguistic use for verbs.  Observe that the  same propositional content can be asserted, refuted, hoped etc. as  in the three following sentences
\ma{Is the window open?} \ma{Open the window!}  
\ma{The window is open.}. Here we focus on assertions, and questions.

\section{Dialogical Logic}
Usually the inferentialist view of meaning is formally developed, as far as logic is concerned, by the use of natural deduction systems \cite{Francez2015pts}. We deviate from this type of treatment by proposing to formally implement the ideas of inferentialist meaning theory through the tools provided by dialogical logic. 
  In our view, the connection with semantics based on the notion of argument is clearer within the latter paradigm: an argument in favor of a statement is often developed when a critical audience, real or imaginary, doubts the truth, or the plausibility of the proposition. In this case, in order to successfully assert the statement,  a speaker or proponent of it must be capable of providing all the justifications that the audience is entitled to demand. Taking this idea seriously, an approximation of the meaning of a sentence in a given situation can be obtained by studying the \emph{argumentative dialogues} that arise once the sentence is asserted in front of such a critical audience. This type of situation is captured, with a reasonable degree of approximation by dialogical logic. In the dialogical logic framework, knowing the meaning of a sentence means  being able to provide a justification of the sentence in front of a critical audience. Note that with this type of methodology the requirement of manifestability required to attribute knowledge of the meaning of a sentence to a locutor is automatically met. The locutor who asserts a certain formula is obliged to make his knowledge of the meaning manifest so that he can answer the questions and objections of his interlocutor.  In addition, any concessions made by his interlocutor during the argumentative dialogue will form the linguistic context in which to evaluate the initial assertion.

We now gradually present dialogical logic.   
Although the study of dialectics --- the art of correct debate --- and logic --- the science of valid reasoning --- have been intrinsically linked since their beginnings \cite{Castelnerac2013,Novaes,Castelnerac2009},  modern mathematical logic had to wait until the 50s of the last century to ensure that the logical concept of validity was expressed through the use of dialogical concepts and techniques. Inspired by the Philosophical Investigations of Wittgenstein \cite{Wittgenstein1953investigations}, the German mathematician and philosopher Lorenzen \cite{lorenzen1978dialogische} proposed to analyze the concept of validity of a formula $ F $ through the concept of winning strategy in a particular type of two-player game. This type of game is nothing more than an argumentative dialogue between a player, called Proponent, which affirms the validity of a certain formula $ F $ and another player, called opponent, which contends its validity. 
The argumentative dialogue starts by the proponent affirming a certain formula. The opponent takes turns and attacks the claim made by the proponent according to its logical form. The proponent can, depending on his previous assertion and on the form of the attack made by the opponent, either defend his previous claim or counter attack. The debate evolves following this pattern. The proponent wins the debate if he has the last word, i.e., the defence against one of the attack made by the opponent is a proposition that the opponent can not attack without violating the debate rules.

Dialogical logic was initially conceived by Lorenzen as a foundation for  intuitionistic logic (IL). Lorenzen idea was, roughly speaking, the following. It is possible to define a ``natural'' class of dialogue game in which given a formula $F$ of $IL$, i the proponent can always win a game on $F$, \emph{no matter how the opponent choose to attack his assertion in the debate}, if $F$ is IL-valid.  This intuition was formalized as the completeness of the dialogical proof system with respect to  provability or validity in any model:  

\begin{description}\item[\it Completeness of a dialogical proof system:] 
Given a logical language $L$ and a notion of validity for $L$ (either proof theoretical or model theoretical) and a notion of dialogical game, a formula $F$ is  valid in $L$  if and only if, there is a winning strategy for the proponent of $F$ in the class of games under consideration.  
\end{description} 

Where a winning strategy can be intuitively understood as an algorithm that take as input the moves of the  game made so far  and outputs proponents moves. Unfortunately, almost 40 years of work were needed to get a first correct proof of the completeness theorem \cite{Felscher2002}. Subsequently different systems of dialogic logic were developed. We present here a system of dialogical logic that is complete for classical first order logic.\\

\subsection{Language, Positive and Negative Subformulas}
We consider a first order language $\mathcal{L}$ in which the definition of terms and formulas is given as usual, and in which the negation of a formula $\neg F$ is defined by $F\Rightarrow \bot$ where $\bot$ is an arbitrary atomic formula.  Since we will use the notion of Gentzen's subformula and the positive (negative) occurrence of a subformula in a formula later, let us briefly detail these notions. 

\begin{enumerate}
    \item $F$ is a Gentzen-subformula of $F$
    \item if $F_1\star F_2$ is a Gentzen-subformula of $F$ so are $F_1,F_2$ for $\star\in \{\wedge,\vee,\Rightarrow\}$
    \item if $\neg F'$ is a Gentzen-subformula of $F$ so is $F'$
    \item if $Qx F'$ is a Gentzen-subformula of $F$ so is $F'[t/x]$ for all $t$ free for $x$ in $F'$ and $Q\in\{\forall,\exists \}$
\end{enumerate}

In the rest of the paper the term \emph{subformula} will always be used as a short-cut for the term Gentzen-subformula.\\
The notions of positive and negative subformulas are defined as follows 

\begin{enumerate}
    \item $F$ is a positive subformula of $F$
    \item if $F_1\wedge F_2$ or $F_1\vee F_2$ is a positive (resp negative) subformula of $F$ so are $F_1,F_2$
    \item if $Q x F'$ is a positive (resp. negative) subformula of $F$ so is $F'[t/x]$
    \item if $F_1\Rightarrow  F_2$  positive (resp negative) subformula of $F$, then $F_1$ is a negative
(resp. positive) subformula of $F$, and $F_2$ is a positive (resp. negative) subformula of
F
\item if $\neg F'$ is a positive (resp negative) subformula of $F$ then $F'$ is a negative (resp. positive) subformula of $F$

\end{enumerate}

\subsection{Attack Rules} Here we define the rules that permit to attack a formula and what counts as a defence against the attack. The symbols in the central column that are not formulas are called auxiliary symbols

 \begin{center}

 \begin{tabular}{l|l|l}
  Assertion   & Attack  & Defence  \\
   \hline 
   $F_1\Rightarrow F_2$ & $F_1$ & $F_2$\\
    $F_1\wedge F_2$  &$?\wedge_i$  & $F_i$\\
    $F_1\vee F_2$ & $?\vee $ & $F_i$\\
    $\forall x F$ & $?\forall [t/x]$ & $F[t/x]$\\
    $\exists x F$ & $?\exists $ & $F[t/x]$\\

\end{tabular}
\end{center}
 In the table $i\in\{1,2\}$, 
  $x$ stands for an arbitrary  variable, and $t$ stands for an arbitrary term. In the attack $?\forall [t/x]$   and in the defence $F[t/x]$,  $t$ is called the  chosen term. 
   For the sake of clarity we provide a paraphrase of the rules.

  \begin{description}
  \item[$\Rightarrow$] If one of the two participants in the dialogue asserts a conditional $F_1\Rightarrow F_2$, his opponent  may attack this assertion by asserting, in turn, the antecedent of the  conditional $F_1$ and ask him to assert $F_2$.
  The one who asserted the conditional can defend himself against this attack by asserting $F_2$, or, he can counterattack on $F_1$. If he chose the latter option, the roles of the two players would be reversed.

  \item[$\wedge$]  If one of the two participants in the dialogue affirms a conjunction, his opponent may attack the assertion by choosing one of the two members of the conjunction and asking to continue the dialogue by asserting the chosen member. 
The one who asserted the conjunction can defend himself by asserting the member chosen by his opponent. 
  \item[$\vee$] If one of the two participants in the dialogue asserts a disjunction, his opponent attacks the assertion by asking him to choose one of the two members and assert it. 
The one who  asserted the disjunction can defend himself by choosing one of the two members and asserting it. 
  
  \item[$\forall $] If one of the two participants in the dialogue asserts  universally quantified formula $\forall x F$ his opponent  may attack the formula by choosing a term $t$ of the language  and asking him to assert $F[t/x]$, i.e., the  formula in which the variable $x$ is replaced by the term $t$. The one  who has asserted the universally quantified formula can defend himself against such an attack by asserting $F[t/x]$. 
  
  \item[$\exists$]
  
  if one of the two participants in the dialogue asserts a existentially quantified formula $\exists x F$ his opponent  may attack the formula by asking him to choose a term  $t$ of the language  and assert $F[t/x]$, i.e.\ the  formula in which the variable $x$ is replaced by the term $t$. The one  who has asserted the existentially quantified formula can defend himself against such an attack by choosing a term $t$ and asserting $F[t/x]$.

  \end{description}
  
  Remark that the rule $\wedge $ and $\forall $ are similar: the one who attacks the assertion makes the choice of the subformula or of the term respectively. In the same way the rule for $\vee$ and $\exists$ are similar. The choice of the subformula, or of the term, is a responsibility of the one who defends.

\subsection{Rules of Dialogical Games}

\textbf{Moves}  are pairs $(i,s)$ with $i\in\{?,!\}$ and $s$ being either a formula or an auxiliary symbol. Moves  are called attacks whenever $i=?$ and defences whenever $i=!$. 

An \textbf{attack} $(\mathbf{?},F)$ 
with $F$ a formula is said to be an assertion.

Defences are assertions. 

 A \emph{Pre-justified} sequence is given by a sequence of moves 
$\Vec{M}=M_0,M_1\ldots M_j\ldots$, together  with a partial  function $f:\Vec{M}\to \Vec{M}$  such that for all all $M_i$ in the sequence for which  $f$ is defined, $f(M_i)<M_i$. Let
\medskip

\centerline{$\Vec{M}=M_0,M_1\ldots M_j\ldots$}

An attack move $M_n$ of the form $(?,s)$ is said to be \textbf{justified} whenever  $f(M_n)$ is an assertion of  $F$  and $s$ is an attack of $F$. We say  that $f(M_n)$ is the \textbf{enabler} of $M_n$.

A defence $M_n$ of the form $(!,F)$ is \textbf{justified} if $f(M_n)=(?,s)$  is a justified attack of $M_j$ $M_j$ asserts $F'$, $s$ attacks $F'$ and $F$ is a defence from $s$. $M_i$ is the \textbf{enabler} of $M_n$. An assertion $M_n$ is a \textbf{reprise} if and only if there exists another move $M_j$   with smaller index and opposite parity that asserts the same formula. An assertion $M_n$ is called an \textbf{existential repetition}, if its enabler is of the form $(?,\exists)$ and there is another move of the same parity $M_n'$ with $n'<n$ having the same enabler. A pre-justified sequence in which each move is justified is called \emph{justified sequence}

\subsection{Games}

\begin{definition} 
 A \textbf{game}   for a formula $F$ is a pre-justified sequence 
 
 \[M_o,M_1,\ldots M_j\ldots \]
 such that 
 
\begin{enumerate}
    \item $M_o$ is ($!,F)$  and $M_1,\ldots M_j\ldots $ is  a justified sequence of in which each odd-index move is enabled by the its immediate predecessor and in which each even-index move is enabled by a preceding odd-index move
   
    \item if an even-index move asserts an atomic formula  then it is a reprise.
    
    \item for all even $m,n$  if  $M_m$ and $M_n$ are defence moves that asserts the same subformula occurrence of $F$ and are enabled by the same move $M_j$ then $m=n$

\end{enumerate}
\end{definition}

Odd-index moves are opponent moves ($\mathbf{O}$-moves) while even index moves are called proponent moves ($\mathbf{P}$-moves).
A move $M$ is \textbf{legal} for a game $G$ if the pointing sequence $G,M$ is a game.

A game $G$ is $\mathbf{won}$ by the proponent if, and only if,  it is finite and there is no $\mathbf{O}$-move which is legal for $D$. It is won by the opponent otherwise. 

Here are some properties needed in the proof of the completeness of our dialogical games. 

\begin{proposition}
Let $\Vec{M}=M_o,\ldots M_n$ be a game. $\Vec{M}$ is won by $\mathbf{P}$  only if, $M_n$ is the assertion of some atomic formula $a(t_1,\ldots t_m)$
\end{proposition}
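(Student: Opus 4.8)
The plan is to prove the contrapositive together with a direct analysis of the last move. Suppose $\vec{M} = M_0, \ldots, M_n$ is a game won by $\mathbf{P}$. First I would observe that $n$ must be even: indeed, by definition a game won by $\mathbf{P}$ is finite and admits no legal $\mathbf{O}$-move extending it; since $\mathbf{O}$-moves are exactly the odd-index moves, the next move $M_{n+1}$ after $M_n$ would have to be an $\mathbf{O}$-move, so what we need is precisely that $n$ is even (the last move played is a $\mathbf{P}$-move) and that no $\mathbf{O}$-reply is available. So $M_n$ is a $\mathbf{P}$-move. The case $n = 0$ is handled separately: if $M_0 = (!,F)$ is already terminal with no legal $\mathbf{O}$-move, then in particular $\mathbf{O}$ cannot attack $F$; inspecting the attack table, every non-atomic principal connective ($\Rightarrow, \wedge, \vee, \forall, \exists$) admits an attack, so $F$ must be atomic, giving the claim.

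Next, for $n \geq 2$, I would argue that $M_n$ cannot be a non-atomic assertion, nor an attack carrying a non-atomic formula. The key point is: if $M_n$ asserts a non-atomic formula $G$ (whether $M_n$ is a defence $(!,G)$ or an assertion-attack $(?,G)$ with $G$ non-atomic), then $\mathbf{O}$ has a legal response, namely the canonical attack on the principal connective of $G$ as listed in the attack-rules table ($?\wedge_i$, $?\vee$, $?\forall[t/x]$, $?\exists$, or asserting the antecedent of an implication). I would check that this candidate $\mathbf{O}$-move $M_{n+1}$ is justified (its enabler is $M_n$, an assertion, and the move is a legitimate attack of that formula) and that adjoining it keeps all three game conditions satisfied: condition 1 holds because $M_{n+1}$ is odd-indexed and enabled by its immediate predecessor $M_n$; condition 2 is vacuous for an odd-index move; condition 3 is vacuous since $M_{n+1}$ is not a defence. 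Hence $M_{n+1}$ is a legal $\mathbf{O}$-move, contradicting that $\mathbf{P}$ won. I would also dispose of the possibility that $M_n$ is an attack carrying an \emph{auxiliary} symbol (like $?\wedge_i$, $?\forall[t/x]$, etc.): such an $M_n$ enables a defence by the player who made the original assertion, but that defence is a $\mathbf{P}$- versus $\mathbf{O}$-parity issue — since $M_n$ is even-indexed ($\mathbf{P}$), its enabler is an odd-indexed $\mathbf{O}$-assertion, and $\mathbf{O}$ can legally defend it at index $n+1$ (checking condition 3 does not block this, as it only forbids $\mathbf{P}$ repeating a defence of the same subformula occurrence against the same enabler). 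Either way a legal $\mathbf{O}$-move exists, contradiction. The only remaining possibility is that $M_n$ is the assertion of an atomic formula $a(t_1, \ldots, t_m)$, which is what we wanted.

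The main obstacle I anticipate is the careful bookkeeping in the case analysis on what kind of move $M_n$ is — there are several shapes ($\mathbf{P}$-defence, $\mathbf{P}$-counterattack reusing an auxiliary symbol, $\mathbf{P}$-attack asserting a formula) and for each I must exhibit a concrete legal $\mathbf{O}$-continuation and verify it against all three clauses of the game definition, paying particular attention that condition 2 (even-index atomic moves must be reprises) and condition 3 (no two $\mathbf{P}$-defences of the same occurrence from the same enabler) never actually obstruct the $\mathbf{O}$-move I propose, since those constraints bite only on even-index ($\mathbf{P}$) moves. A secondary subtlety is the $\exists$/existential-repetition machinery: when $\mathbf{O}$ replies with $?\exists$ to a $\mathbf{P}$-asserted $\exists x\,G$, I should confirm this is always legal regardless of prior moves, which it is, since the existential-repetition restriction constrains only the \emph{defender's} choice of witnesses, not the attacker. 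Once these cases are uniformly checked, the proposition follows.
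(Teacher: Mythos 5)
The paper states this proposition without any proof (it is listed among ``properties needed'' and only the second proposition receives an argument), so there is no official proof to compare yours against. Your argument is the natural one and is essentially sound: classify the final move $M_n$ by shape, and in every case where $M_n$ is not the assertion of an atomic formula exhibit a legal $\mathbf{O}$-continuation --- the canonical attack on the principal connective when $M_n$ asserts a compound formula, and the tabulated defence when $M_n$ is an attack carrying an auxiliary symbol --- observing correctly that clauses 2 and 3 of the game definition constrain only even-index moves and therefore never obstruct $\mathbf{O}$, while clause 1 is satisfied because the proposed move is enabled by its immediate predecessor $M_n$. The one soft spot is your opening claim that $n$ must be even. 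This does not follow from the letter of the paper's winning condition: ``finite and no legal $\mathbf{O}$-move'' is \emph{vacuously} satisfied by any finite game ending in an $\mathbf{O}$-move, since the next position is even-indexed and hence reserved for $\mathbf{P}$. A game in which $\mathbf{P}$ is stuck after an $\mathbf{O}$-attack --- say $(?,\wedge_1)$ against a $\mathbf{P}$-asserted $A\wedge B$ with $A$ atomic and never asserted by $\mathbf{O}$, so that clause 2 forbids the defence --- would then literally count as won by $\mathbf{P}$ and would falsify the proposition, its last move being an auxiliary-symbol attack. You are silently repairing the definition so that a $\mathbf{P}$-win requires the last move to be a $\mathbf{P}$-move; that is surely the intended reading (it is what all the displayed winning strategies satisfy), but it should be flagged as a reading of the definition rather than presented as a consequence of it. With that caveat made explicit, your case analysis is complete and correct; the fact that $\mathbf{O}$ could also \emph{defend} against an attack-assertion on an implication is harmless, since in that residual case $M_n$ already asserts an atomic formula and the proposition claims nothing stronger.
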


\begin{proposition}
For all game $G$, for all formula $F$ for all subformula $F'$ of $F$. If there is an $\mathbf{P}$-move ($\mathbf{O}$-move) in $G$ that asserts $F'$ then $F'$ is  a  positive (negative) subformula of $F$. 
\end{proposition}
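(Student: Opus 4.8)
\medskip

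\noindent The plan is to prove the two statements simultaneously, by induction on the index $n$ of the move $M_n$ in the game $G$. The induction hypothesis at stage $n$ is: for every $m<n$, if $M_m$ asserts a formula $G$, then $G$ is a positive subformula of $F$ when $M_m$ is a $\mathbf{P}$-move and a negative subformula of $F$ when $M_m$ is an $\mathbf{O}$-move. Recall that a move ``asserts a formula'' exactly when it is a defence $(!,G)$ or an attack $(?,G)$ whose second component $G$ is a formula (not an auxiliary symbol), so there is nothing to prove for attacks carrying an auxiliary symbol. The base case is immediate: $M_0=(!,F)$ is a $\mathbf{P}$-move and $F$ is a positive subformula of itself.

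\medskip

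\noindent For the inductive step, let $M_n$ with $n\geq 1$ assert $F'$. The crucial observation is that, by the definition of a game, the enabler $f(M_n)$ has strictly smaller index, and unwinding the two enabling clauses --- an odd-index move is enabled by its immediate predecessor, an even-index move by some preceding odd-index move --- pins down the parity of the relevant earlier assertion, so that the induction hypothesis applies to it. Two cases arise, according to whether $M_n$ is a defence or an attack that is itself an assertion; the latter, by the attack table, can only be an attack $(?,G_1)$ against an assertion of $G_1\Rightarrow G_2$ (including $\neg G_1$, read as $G_1\Rightarrow\bot$).

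\medskip

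\noindent If $M_n$ is a defence, then $f(M_n)=(?,s)$ is a justified attack whose own enabler asserts some $G$, with $s$ an attack on $G$ and $F'$ the corresponding defence. Tracing the parities shows that the move asserting $G$ has the same parity as $M_n$ and a smaller index, so by the induction hypothesis $G$ is a positive subformula of $F$ if $M_n$ is a $\mathbf{P}$-move and a negative subformula otherwise. A line-by-line look at the attack/defence table shows that the defence $F'$ is in every case a subformula of $G$ of the \emph{same} polarity --- $G_2$ for $G_1\Rightarrow G_2$, $G_i$ for $G_1\wedge G_2$ and for $G_1\vee G_2$, $G_0[t/x]$ for $\forall x\,G_0$ and for $\exists x\,G_0$ --- so clauses 2--4 of the definition of positive/negative subformula give that $F'$ has the required polarity. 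If instead $M_n=(?,G_1)$ is an attack against $G_1\Rightarrow G_2$, its enabler is the move asserting $G_1\Rightarrow G_2$, which has the opposite parity and a smaller index; by the induction hypothesis $G_1\Rightarrow G_2$ is a negative subformula of $F$ when $M_n$ is a $\mathbf{P}$-move and a positive one when $M_n$ is an $\mathbf{O}$-move, and clause 4 (the antecedent of an implication flips polarity) yields that $F'=G_1$ is positive, respectively negative, as wanted.

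\medskip

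\noindent I expect the only real difficulty to be bookkeeping rather than mathematics: one has to prove the $\mathbf{P}$-case and the $\mathbf{O}$-case together, since the implication rules are precisely where the polarities of the two players' assertions swap, and one must not forget that an attack of the form $(?,G_1)$ counts as an assertion of $G_1$ to which the induction hypothesis applies. Once the enabler relation is correctly unwound so that every case reduces to an earlier assertion of known parity, each remaining verification is a single check against the attack/defence table and the five clauses defining positive and negative subformulas.
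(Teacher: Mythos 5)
Your proof is correct and follows essentially the same route as the paper's: induction on the position of the move in the game, a case split between defences and attacks that are assertions, and in each case chasing the enabler back to an earlier assertion of known parity so that the polarity clauses for subformulas apply. The only difference is one of thoroughness --- the paper works out only the implication sub-case of the defence step (dismissing the others as ``easier'') and the attack-on-an-implication case, whereas you check every row of the attack/defence table explicitly.
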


\begin{proof}
Let $F$ be any formula and $G$ any game.  We show the proposition by induction on the length of $G$. If the length is $1$ then $G$ consist of only one move that is a $\mathbf{P}$-move asserting a formula $F$ and so the proposition holds. \\
Suppose that the proposition holds for all games $G$ having length $n$ and  let $G'$ be a game having length $n+1$. Let $M_n$ be the last move of $G'$. Suppose that $M_n$ is a $\mathbf{P}$-move (the argument for $\mathbf{O}$-moves runs in a very similar  way)  We have three cases. 

\begin{enumerate}
    \item If $M_n$ is not an assertion the proposition holds automatically by induction hypothesis. 
    \item if $M_n$ is a defence asserting   some formula  $F'$. Since $G'$ is a game the sequence $M_1\ldots M_n$ is justified. Thus $M_n$ is enabled by some $\mathbf{O}$-move $M_k$ with $(k<n)$. If $M_k:=(?,F_1)$ (the other cases are easier) then it is an attack against $M_{k-1}$ and $M_{k_1}$ is a $\mathbf{P}$-move that asserts $F_1\Rightarrow F'$. By induction hypothesis $F_1\Rightarrow F'$ is a positive subformula of $F$, and $F_1$ is a negative subformula of $F$. Thus $F'$ is a positive subformula of $F$ by definition. 
    
\item If $M_n$ is an assertion and an attack, let $F'$ be the asserted formula. As before there must exists an enabler of $M_n$, call it $M_k$ $(k<n)$, $M_k$ is necessarily an $\mathbf{O}$-move  that asserts the formula $F'\Rightarrow F''$. By induction hypothesis this last formula is a negative subformula of $F$, thus $F'$ is a positive subformula of $F$ by definition. 
\end{enumerate}
\end{proof}

An easy consequence of the latter proposition is the following 

\begin{proposition}
Let $G$ be a game for a formula $F$ and let $M_n$ be a $\mathbf{P}$-move in $G$ asserting an atomic formula $a(t_1,\ldots t_m)$. Then this latter formula appears both as a negative and positive subformula of $F$
\end{proposition}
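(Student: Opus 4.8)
The plan is to read this off directly from the two preceding facts: the polarity proposition (every $\mathbf{P}$-move asserts only positive subformulas of $F$, every $\mathbf{O}$-move only negative ones) together with clause (2) of the definition of a game, which forces an even-index move asserting an atomic formula to be a reprise. No new induction is needed; the whole point is that the reprise condition supplies a second, opposite-parity witness to the atomic assertion, and the polarity proposition then gives the two polarities at once.

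First I would apply the polarity proposition to $M_n$ itself. Since $M_n$ is a $\mathbf{P}$-move asserting the atomic formula $a(t_1,\ldots,t_m)$, that proposition immediately gives that $a(t_1,\ldots,t_m)$ is a positive subformula of $F$. This is the ``positive'' half of the claim.

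Next, for the ``negative'' half, I would use that $M_n$ has even index and asserts an atomic formula, so by clause (2) in the definition of a game it is a reprise. Unfolding the definition of reprise, there exists a move $M_j$ with $j<n$ whose parity is opposite to that of $M_n$ — hence $M_j$ has odd index, i.e.\ it is an $\mathbf{O}$-move — and which asserts the same formula $a(t_1,\ldots,t_m)$. Applying the polarity proposition now to $M_j$ yields that $a(t_1,\ldots,t_m)$ is a negative subformula of $F$. Combining the two observations, $a(t_1,\ldots,t_m)$ occurs in $F$ both as a positive and as a negative subformula, which is exactly the statement.

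I do not expect a genuine obstacle here — this is why it is billed as an easy consequence. The only points deserving a line of care are: that ``asserts the same formula'' in the definition of reprise is to be read as the same atomic formula, so that the polarity proposition applies to $M_j$ verbatim; and that ``opposite parity'' is what pins $M_j$ down as an $\mathbf{O}$-move rather than merely as some earlier move — both of these are immediate from the definitions of parity and of reprise given above.
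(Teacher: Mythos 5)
Your proof is correct and follows exactly the route the paper intends: the paper presents this proposition as an ``easy consequence'' of the polarity proposition, and your argument — positive polarity from applying that proposition to the $\mathbf{P}$-move $M_n$ itself, negative polarity from the reprise forced by clause (2) of the game definition, whose opposite-parity witness is an $\mathbf{O}$-move asserting the same atomic formula — is precisely that consequence spelled out.
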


\subsection{Strategies}

Informally speaking, a strategy for a player  is an  algorithm for playing the game that tells the player what to do for every possible situation throughout the game. 
We informally describe how a strategy should operate and then formalize this notion. Imagine being engaged in a 
game $G$, that the last move of $G$ was played according to the strategy, and that it is now your opponent's turn to play. Your opponent could extend the game in different ways: for example if you are playing chess, you are  white and you just made your first move by moving a pawn to a certain position of the chessboard,  black  can in turn move a pawn or move a  horse. If you are playing according to the strategy, the strategy should tell you how to react against either type of move. If black moves a pawn to $C6$ and you just moved your pawn to $C3$ then move the horse to $H3$. If black moves a horse to $H6$ and you just moved your pawn to $C3$ then move  you pawn in $B4$. This type of pattern can be visualized as a tree in which each node is a move in the game, the moves of my opponent have at most one daughter, and my moves have as many daughters as there are available moves for my opponent. 
In fact it is rather standard, in the game semantics/dialogical logic literature to define a strategy as a tree of games \cite{Felscher2002,mcCusker,hylandGame}. We thus formalize the notion of strategy as follows. 
Given a game $G$ we say that a variable $x$ appears in the game iff $x$ appears in some asserted formula or is free in  the choice of some universal attack. Let $(v_i)_{i\in I}$ be an enumeration of the variables in $L$. 
A $\mathbf{strategy}$ $S$ is a prefix-closed set of games for the same formula (i.e. a tree of games for the same formula)  such that 
    \begin{enumerate}
    \item If $G$ belongs to the strategy and the last move of $G$ is a $\mathbf{P}$ move that neither  an assertion of a universally quantified formula nor an existential attack then for all move $M$ legal for $G$, $G,M$ belongs to the strategy
    
 \item if $G,M$ and $G,M'$ belong to $S$ and $M,M'$ are $\mathbf{P}$-moves then $M=M'$
\item \label{universal} if $G,M$ and $G,M'$ belongs to $S$ and $M,M'$ are $\mathbf{O}$-moves and universal attacks then $M=M'$ moreover the chosen variable is the first variable in the enumeration that does not appear in $G$
\item if $D,M$ and $D,M'$ belongs to $S$ and $M,M'$ are $\mathbf{O}$-moves and existential defences then $M=M'$ moreover the term chosen to defend is the first variable in the enumeration that does not appear in $G$
\item\label{no-delay }if $G$ belongs to $S$ and the last move of $G$ is $\mathbf{O}$-move that is an attack against an existential quantifier then $G,M$ belongs to $S$, where $M$ is a defence against the last move of $G$
 \end{enumerate}

    A strategy $S$ is $\mathbf{P}$-winning iff each game in $S$ is won by $\mathbf{P}$. Considering that a strategy is a tree of games,  in what follows we will sometimes speak of nodes of a strategy as a shortcut for moves of a game $G$ that belongs to the strategy. 
    
   \subsection{Validity} 
   
   \begin{definition} 
   Given a first order formula $F$ we say that $F$ is \emph{dialogically valid} if, and only if, there exists a winning strategy $S$ for the the formula. 
   \end{definition}

    Below are three examples of winning strategies. 
    The  blue dotted arrow represent the function $f$ that points back from $\mathbf{P}$ moves to the $\mathbf{O}$ move that enables them. keep in mind that every  $\mathbf{O}$-move   is enabled by the immediately preceding move. 

\begin{figure} 
    \begin{tikzpicture}[scale=1.3,->,level/.style={sibling distance = 8.5cm,
  level distance = 0.6cm}] 
\node  {\footnotesize{$!,\forall x a(x) \vee \exists x \neg a(x) 
            $}}
    child{ node  (1){\footnotesize {$?\vee $}}
      child{ node (2) {\footnotesize{$!,\forall x a(x) $}} 
            	child{ node (3)  {\footnotesize{$?,\forall[w/x] $}}
        child{node (4){\footnotesize{$!,\exists x \neg a(x) $}}
        child{node  (5){\footnotesize {$?,\exists$}}
              child{node(6)
        {\footnotesize{$!,\neg a(w)$}}  
        child{node  (7){\footnotesize {$?,a(w)$}}
        child{node (8)   {\footnotesize{ $!,a(w)$ }}
         }}}}}}}};
      \draw[thick,blue,dotted] (2) to[out=180,in=180](1); 
        \draw[thick,blue,dotted] (4) to[out=0,in=0](1);
         \draw[thick,blue,dotted] (6) to[out=180,in=180](5);
          \draw[thick,blue,dotted] (8) to[out=180,in=180](3); 
          
; 
\end{tikzpicture}
\hfill 
\begin{tikzpicture}[scale=1.3,->,level/.style={sibling distance = 8.5cm,
  level distance = 0.6cm}] 
\node  {\footnotesize{$!,\exists x (a(x)\Rightarrow \forall y a(y))
            $}}
    child{ node  (1){\footnotesize {$?,\exists $}}
      child{ node (2) {\footnotesize{$!,\ a(c)\Rightarrow    \forall y a(y) $}} 
            	child{ node (3)  {\footnotesize{$?,a(c) $}}
        child{node (4){\footnotesize{$!,\forall y a(y) $}}
        child{node  (5){\footnotesize {$?,\forall[w/y]$}}
              child{node(6)
        {\footnotesize{$!,a(w)\Rightarrow\forall y a(y)$}}  
        child{node  (7){\footnotesize {$?,a(w)$}}
        child{node (8)   {\footnotesize{ $!,a(w)$ }}
         }}}}}}}};
      \draw[thick,blue,dotted] (2) to[out=180,in=180](1); 
        \draw[thick,blue,dotted] (4) to[out=180,in=180](3);
         \draw[thick,blue,dotted] (6) to[out=180,in=180](1);
          \draw[thick,blue,dotted] (8) to[out=0,in=0](5); 
          
; 
\end{tikzpicture}

%

\vspace{1cm}

\begin{center} 
\begin{tikzpicture}[scale=1.3,->,level/.style={sibling distance = 2cm,
  level distance = 0.6cm}] 
\node  {\footnotesize{$!,\forall x (a(x)\wedge b(x))\Rightarrow (\forall x a(x))\wedge (\forall x b(x))  
            $}}
    child{ node  (1){\footnotesize {$[\forall x (a(x)\wedge b(x)) $}}
      child{ node (2) {\footnotesize{$!(\forall x a(x))\wedge (\forall x b(x)) $}} 
           child{node(3){\footnotesize{$[?\wedge_1$}}
    child{node (4) {\footnotesize{$!\forall x a(x)$ }}
    child{node(5){\footnotesize{$?,\forall [w/x]$}}
    child{node (6) {\footnotesize{$\ ?\forall[w/x]$}}
    child{node (7) {\footnotesize{$! ,a(w)\wedge b(w))$}}
    child{node (8) {\footnotesize{$?,\wedge_1$}}
    child{node (9) {\footnotesize{$!,a(w)$}}
     child{node(10){\footnotesize{$!a(w)$}}}}}}}}}}
    child{node  (11){\footnotesize{$?,\wedge_2$}}
     child{node (12) {\footnotesize{$!,\forall x b(x)$ }}
    child{node(13){\footnotesize{$?,\forall [w/x]$}}
    child{node (14) {\footnotesize{$?,\forall[w/x]$}}
    child{node (15) {\footnotesize{$!,a(w)\wedge b(w)$}} 
    child{node (16) {\footnotesize{$?,\wedge_2$}}
    child{node (17) {\footnotesize{$!,b(w)$}} 
    child{node (18) {\footnotesize{$!,b(w)$}}}}}}}}}}}};
            
      \draw[thick,blue,dotted] (2) to[out=-180,in=180](1); 
      \draw[thick,blue,dotted] (4) to[out=-180,in=180](3);
      \draw[thick,blue,dotted] (6) to[out=-180,in=180](1);
      \draw[thick,blue,dotted] (8) to[out=-180,in=180](7); 
         \draw[thick,blue,dotted] (10) to[out=180,in=180](5); 
       \draw[thick,blue,dotted] (12) to[out=0,in=0](11); 
        \draw[thick,blue,dotted] (14) to[out=0,in=0](1);
         \draw[thick,blue] (16) to[out=0,in=0](15);
          \draw[thick,blue,dotted] (18) to[out=0,in=0](13);
         \end{tikzpicture} 
\end{center} 
\caption{Three winning strategies. 
} 
\end{figure}
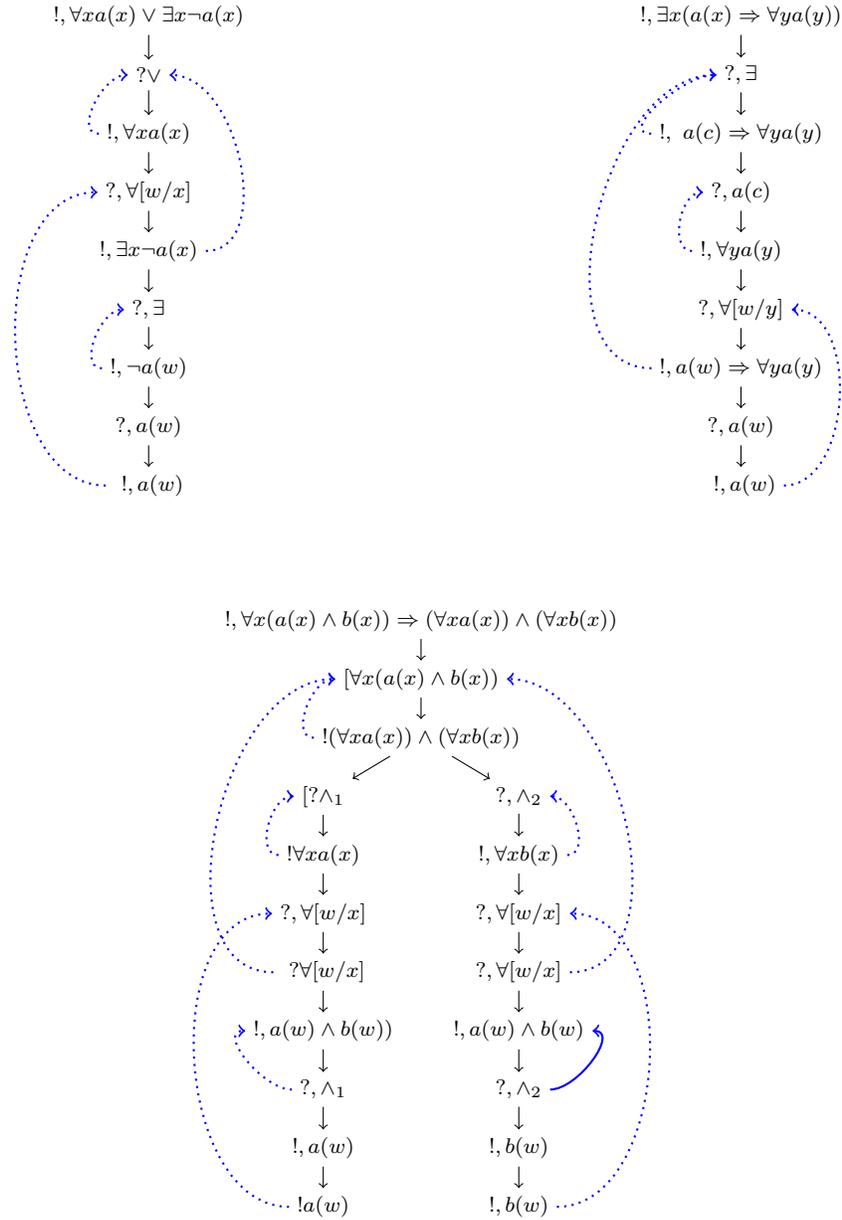

\section{Dialogical Validity is Classical Validity}

In this section we show the equivalence, for a formula $F$ between the dialogical validity of $F$ (the existence of winning strategy for the proponent of $F$)  and classical validity of $F$ ($F$ being true in all interpretations). To prove this equivalence, we use a particular version of the sequent calculus, GKK (see the table below). The calculus GKK is a variant of the sequent calculus  GKm \cite{TvD96basicPT},  an extension  to first order of the propositional calculus LKQ by Herbelin  \cite{Her95A}: all those calculi are complete for classical logic (first order for GKK and GKm, propositional for LKQ), in the sense that 
a sequent $\Gamma\vdash \Delta$ can be derived if and only if the formula $\bigwedge \Gamma \Rightarrow \bigvee \Delta$ classically valid. 

Later on, as Herbelin does for LKQ, we shall consider a  restriction (\emph{strategic} derivations) on the use of the left implication introduction rule, a restriction which does not affect the completeness of the sequent calculus with respect to validity. 

\newpage 

\begin{definition}
The sequent calculus GKK is defined by the  rules below: 

\begin{center} 
\footnotesize 
\begin{tabular}{p{0.40\textwidth}p{0.48\textwidth}}
\multicolumn{2}{c}{}{\begin{minipage}{0.6\textwidth} 
\begin{prooftree}
\AxiomC{}\RightLabel{$Id$}
\UnaryInfC{$\Gamma, A\vdash \mathbf{A},\Delta$}
\end{prooftree}\end{minipage}} 
\cr  
\begin{prooftree}
        \AxiomC{$\Gamma, A\vdash B,\Delta$}\RightLabel{$\Rightarrow R$}
        \UnaryInfC{$\Gamma\vdash \mathbf{A\Rightarrow B},\Delta$} 
        \end{prooftree}
        &
\begin{prooftree}
\AxiomC{$\Gamma,A\Rightarrow B\vdash A,\Delta$}
\AxiomC{$\Gamma,A\Rightarrow B, B\vdash \Delta$}\RightLabel{$\Rightarrow L$}
\BinaryInfC{$\Gamma, \mathbf{A\Rightarrow B}\vdash \Delta$}
\end{prooftree}
\cr  
\begin{prooftree}
\AxiomC{$\Gamma\vdash A,\Delta$}
\AxiomC{$\Gamma\vdash B,\Delta $}\RightLabel{$\wedge R$}
\BinaryInfC{$\Gamma\vdash \mathbf{A\wedge B},\Delta$}
\end{prooftree}
& 
\begin{tabular}[b]{p{0.28\textwidth}p{0.24\textwidth}}
\begin{prooftree}
\AxiomC{$\Gamma, A,A\wedge B\vdash \Delta$}\RightLabel{$\wedge L_1$}
\UnaryInfC{$\Gamma,\mathbf{A\wedge B}\vdash \Delta$}
\end{prooftree}
& 
\begin{prooftree}
\AxiomC{$\Gamma, B,A\wedge B\vdash \Delta$}\RightLabel{$\wedge L_2$}
\UnaryInfC{$\Gamma,\mathbf{A\wedge B}\vdash \Delta$}
\end{prooftree}
\end{tabular} 
\cr 
\begin{prooftree}
\AxiomC{$\Gamma\vdash A,B,\Delta$}\RightLabel{$\vee R$}
\UnaryInfC{$\Gamma\vdash \mathbf{A\vee B,}\Delta$}
\end{prooftree}
&
\begin{prooftree}
\AxiomC{$\Gamma, A\vee B, A\vdash \Delta$}
\AxiomC{$\Gamma, A\vee B, B\vdash \Delta $}\RightLabel{$\vee L$}
\BinaryInfC{$\Gamma,\mathbf{A\vee B}\vdash \Delta$}
\end{prooftree}
\cr 
\begin{prooftree}
\AxiomC{$\Gamma \vdash \exists x A, A[t/x],\Delta$}
\UnaryInfC{$\Gamma \vdash \mathbf{ \exists x A},\Delta $}
\end{prooftree}
& 
\begin{prooftree}
\AxiomC{$\Gamma,A[y/x],\exists x A\vdash \Delta$}
\UnaryInfC{$\Gamma,\mathbf{\exists x A}\vdash \Delta$}
\end{prooftree} 
\cr 
\begin{prooftree}
\AxiomC{$\Gamma\vdash A[y/x],\Delta$}\RightLabel{$\forall R$}
\UnaryInfC{$\Gamma\vdash \mathbf{ \forall x A},\Delta$}
\end{prooftree}
&
\begin{prooftree}
\AxiomC{$\Gamma A(t), \forall x  A \vdash \Delta$}\RightLabel{$\forall L$}
\UnaryInfC{$\Gamma,\mathbf{\forall x A}\vdash \Delta$}
\end{prooftree}
\end{tabular} 
 \end{center} 
\end{definition}

 Here are some comments and terminology on the calculus GKK: 
 \begin{itemize} 
 \item Greek upper-case letters $\Gamma,\Delta,\ldots$ stand for multisets of formulas. 
   \item 
 In the $Id$-rules  $A$ is an \underline{atomic} formula. Since this rule has no premisse it is also called an \emph{axiom}. 
 \item 
 In the rules $\forall R$ and $\exists L$ the variable $y$ does not occur anymore in their conclusion sequents.  
 \item 
The bold formulas in the conclusion of each rules are called \emph{active formulas}. 
\item 
A derivation $\pi$ of a sequent $\Gamma\vdash \Delta$ is a tree of sequents constructed according to the above rules, in which leaves are axioms  $Id$-rules. 
\item 
We say that a sequent $\Gamma\vdash \Delta$ is derivable or provable whenever there exists a derivation of it from axioms.  
\item 
A derivation $\pi$ of $\Gamma\vdash \Delta$ with $\Gamma$ empty and $\Delta$ reduces to one formula $F$, is said to be a \emph{proof} of $F$. 
\end{itemize} 

\begin{proposition}
A  sequent  $\Gamma\vdash \Delta$ is derivable in the  sequent calculus  GKK if and only if  is derivable in Gkm. 
\end{proposition}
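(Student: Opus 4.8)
The plan is to prove the two directions by translating derivations rule by rule, so that the whole matter reduces to the admissibility of the structural rules in each calculus. The calculi GKK and GKm share essentially the same logical rules; they differ only in their bookkeeping of weakening and contraction — GKK absorbs weakening into the $Id$ axioms, which carry arbitrary side-contexts $\Gamma,\Delta$, and absorbs contraction into the left rules and into $\exists R$ by keeping a copy of the active formula in the premises (the additive, LKQ-style presentation of \cite{Her95A}), whereas GKm keeps some of these steps explicit. So the first thing I would do is establish that left and right weakening and left and right contraction are height-preserving admissible in GKK, together with height-preserving admissibility of renaming of free variables and invertibility of the invertible rules ($\Rightarrow R$, $\wedge R$, $\vee R$, $\forall R$, and the left conjunction/disjunction rules). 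All of these are the standard inductions on derivation height in the style of the G3-calculi of \cite{TvD96basicPT}: weakening is trivial since the $Id$ axioms already allow arbitrary contexts and the eigenvariable conditions survive renaming; and for contraction the only delicate cases are those in which a copy of the contracted formula is principal in the last rule, where GKK's retention of the active formula in the premises is exactly what lets the contraction be permuted upward and the induction hypothesis applied.

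With these lemmas in hand, the two simulations are routine. For ``GKK-derivable implies GKm-derivable'' I would process a GKK derivation from the root upward: an $Id$ axiom $\Gamma, A\vdash \mathbf{A},\Delta$ comes from the GKm axiom by weakening; each GKK logical rule is either identical to its GKm counterpart or is recovered from it by a contraction that deletes the extra copy of the active formula (and the split rules $\wedge L_1$, $\wedge L_2$ are obtained from GKm's conjunction-left rule by weakening plus contraction, or conversely, using invertibility); this uses only structural rules of GKm, which are available there. For the converse, ``GKm-derivable implies GKK-derivable'', I would do the symmetric thing: GKm axioms and any explicit weakening in GKm are absorbed by GKK's axioms and the admissibility of weakening; explicit contraction in GKm is the admissibility of contraction in GKK proved above; and every GKm logical rule is simulated by first weakening the active formula into the appropriate premise, if needed, and then applying the corresponding GKK rule, possibly followed by a contraction. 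A straightforward induction on the structure of derivations then turns these local translations into global ones, giving the equivalence.

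The main obstacle is the first step — the height-preserving admissibility of contraction in GKK — since GKK's rules keep active formulas around and a few of them ($\exists R$, $\forall L$, $\exists L$) also carry eigenvariable or term-choice data, so the case analysis on the last rule must be done carefully and in tandem with the renaming lemma. Once contraction, weakening, renaming and invertibility are secured, the two simulations and the concluding induction are purely bookkeeping. Alternatively one could route through the semantic completeness of GKm for classical validity together with a soundness argument for GKK, but a direct syntactic translation is cleaner and does not presuppose the model-theoretic results.
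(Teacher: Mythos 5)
Your proposal follows essentially the same route as the paper's own (sketched) proof, which likewise reduces the equivalence to showing each rule of either calculus admissible in the other via the admissibility of weakening and contraction, adapting the standard arguments of \cite{TvD96basicPT}. Your version merely spells out the details (height-preserving admissibility, invertibility, variable renaming) that the paper leaves implicit, so the two are in agreement.
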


\begin{proof}[Sketch]
One just shows that each rule in the above sequent calculus is admissible in Gkm and vice-versa, using contraction and weakening admissibility, adapting the proof from e.g.\ \cite{TvD96basicPT}. 
\end{proof}

\subsection{From Strategies to Derivations}

Because sequent calculus is complete for classical logic, the following proposition shows that a formula $F$ with a winning strategy is true in any interpretation. 

\begin{proposition}\label{strat2proof} 
Given a formula $F$, if there is a winning strategy for $F$ then there is a proof $\pi$ of $F$ in sequent calculus GKK. 
\end{proposition}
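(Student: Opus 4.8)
The plan is to transform a winning strategy $S$ for $F$ into a GKK-derivation by reading each node of the strategy tree as a sequent. The key idea is that, since a strategy is a tree of games and each game is a justified sequence of moves, we can associate to every position in the strategy the sequent $\Gamma \vdash \Delta$ whose succedent $\Delta$ collects the formulas asserted by $\mathbf{O}$-moves played so far that are "still active" (i.e.\ the $\mathbf{P}$-player might still have to answer an attack on them) together with the $\mathbf{P}$-assertion currently under discussion, and whose antecedent $\Gamma$ collects the formulas asserted by $\mathbf{P}$-moves that $\mathbf{O}$ has conceded and may be attacked. Concretely, I would proceed by induction on the height of the strategy subtree rooted at a given node, proving the stronger statement that the sequent associated to any node of $S$ is derivable in GKK.

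**Key steps.** First I would make precise the translation from a node of $S$ to a sequent: using Proposition saying that $\mathbf{P}$-moves assert positive subformulas of $F$ and $\mathbf{O}$-moves assert negative subformulas, one puts $\mathbf{O}$-asserted formulas on the right and $\mathbf{P}$-asserted formulas on the left, so that the sequent at the root is $\vdash F$. Second, I would run the induction on the structure of the strategy subtree. The base case is a leaf of $S$: by the Proposition on won games, a game won by $\mathbf{P}$ ends with a $\mathbf{P}$-assertion of an atomic formula $a(t_1,\dots,t_m)$, and by the further Proposition this atom occurs both positively and negatively in $F$, hence appears on both sides of the associated sequent, so the sequent is an instance of the $Id$-axiom. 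For the inductive step, I would do a case analysis on the last $\mathbf{P}$-move in the continuation — whether it is an assertion forced by an $\mathbf{O}$-attack, or an attack on an $\mathbf{O}$-assertion — and on the connective of the formula it concerns. Each such $\mathbf{P}$-response together with the (by strategy clauses) determined $\mathbf{O}$-continuations matches exactly one right-rule or left-rule of GKK: a $\mathbf{P}$-defence of a conjunction against $?\wedge_i$ mirrors $\wedge L_i$ applied to the corresponding left formula… no, rather: an $\mathbf{O}$-attack $?\wedge_i$ on a $\mathbf{P}$-asserted conjunction in $\Delta$ followed by the $\mathbf{P}$-defence corresponds to $\wedge R$ (with its two premises coming from the two choices of $i$ that the strategy must cover); the $\Rightarrow R$ rule corresponds to $\mathbf{O}$ attacking a $\mathbf{P}$-asserted implication by asserting the antecedent (which moves to $\Gamma$) and $\mathbf{P}$ continuing with the consequent; $\forall R$ corresponds to the $\mathbf{O}$-attack on a universally quantified $\mathbf{P}$-assertion with the fresh variable guaranteed by strategy clause~\ref{universal}; dually, $\mathbf{P}$-attacks on $\mathbf{O}$-asserted formulas yield the left rules and the $\exists R$/$\forall L$ right/left rules via the chosen term. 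The reprise condition on atomic $\mathbf{O}$-moves and the existential-repetition mechanism are what let the "active" copies of formulas be retained in $\Gamma,\Delta$ (GKK keeps the principal formula in the premises of $\wedge L$, $\vee L$, $\Rightarrow L$, $\forall L$, $\exists$-rules), so the bookkeeping matches.

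**Main obstacle.** The delicate point is the precise definition of the sequent attached to a node and the verification that, after every $\mathbf{P}$-move, the resulting sequent is exactly the conclusion of a GKK-rule whose premises are the sequents attached to the children in $S$. In particular one must check that the contraction-style retention of principal formulas in GKK (e.g.\ $A\Rightarrow B$ persisting in both premises of $\Rightarrow L$, $\exists x A$ persisting in the premise of the $\exists R$-rule) corresponds exactly to the possibility, allowed by the game rules, of attacking the same formula occurrence again — and conversely that clause~3 of the definition of game (a given attack is answered by a defence of a given subformula occurrence at most once) together with clause~2 of the strategy definition ($\mathbf{P}$ is deterministic) prevents spurious branching, so that the tree $S$ really does have the branching structure of a GKK-derivation. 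I also need the finiteness of each game in a winning strategy (so the induction is well-founded) — the quantifier clauses of the strategy definition, forcing fresh variables and forbidding delayed existential defences, are what bound the trees; I would invoke or briefly argue this termination before setting up the induction. Handling the $\exists R$ and $\forall L$ rules, where the strategy's chosen term must be read off as the rule's witness term, is routine once the translation is fixed.
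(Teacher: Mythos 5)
Your proposal follows essentially the same route as the paper: label the nodes of the strategy tree with two-sided sequents, show the leaves are $Id$-axioms via the reprise condition on atomic $\mathbf{P}$-assertions, and match each $\mathbf{P}$-move together with the $\mathbf{O}$-continuations the strategy must cover against exactly one GKK rule, using the freshness clauses for $\forall R$ and $\exists L$ and the retained principal formulas for repeated attacks and existential repetitions. The one point you must repair is the stated orientation of the translation: you say that $\mathbf{O}$-asserted formulas go in the succedent and $\mathbf{P}$-asserted formulas in the antecedent. That is backwards --- it would make the root sequent $F\vdash$ rather than $\vdash F$, and it contradicts your own case analysis, where (correctly) the antecedent $A$ that $\mathbf{O}$ concedes when attacking a $\mathbf{P}$-asserted implication ``moves to $\Gamma$''. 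The convention that makes the rule-matching work, and the one the paper uses, is the opposite: $\mathbf{O}$'s assertions (its concessions, which are negative subformulas of $F$) populate $\Gamma$, and $\mathbf{P}$'s assertions (positive subformulas) populate $\Delta$; with that correction the leaf sequents have the form $\Gamma,A\vdash A,\Delta$ precisely because the atomic $\mathbf{P}$-assertion is required to be a reprise of an earlier $\mathbf{O}$-assertion of the same atom. Since your detailed cases already use the correct orientation, this is a fixable slip rather than a failure of the method. Otherwise your outline --- including the concern about finiteness of the tree and about principal formulas persisting in the premises of the left rules and of $\exists R$ --- matches the paper's proof, which merely organises the induction over the tree $D(S)$ of $\mathbf{O}$-move subsequences instead of over all nodes, a cosmetic difference.
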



To do so, we associate sequents  to nodes  of a strategy and we prove that whenever the sequents associated with  nodes  of height $n$ are provable, so are those of height $n+1$. So let's start by associating sequents to \emph{some} nodes of a strategy. To do this we focus on nodes of a strategy that are $\mathbf{O}$ moves.\\
 
 
 Let $S$ be a strategy and  $G$ be a game in $S$.   Consider the sequence of moves $\Vec{M }$ in $G$ obtained by  forgetting all its   $\mathbf{P}$-moves i.e  if $G$ is $M_0,M_1,\cdots M_{2n}\cdots $  then  $\Vec{M}$ is 
    \[M_1,M_3,\ldots M_{2n-1}\ldots \]
    
    Call $D(S)$ the prefix closed set (i.e. the tree) of such sequences containing the empty sequence $\epsilon$. 
   Given a strategy $S$ we associate a sequent  $\Gamma_{\vec{M}}\vdash \Delta_{\vec{M}}$  to each $\Vec{M}\in D(S)$ in the following way 
   
   \begin{enumerate}

       \item To the empty sequence $\epsilon$ we associate $F$. 
          \item if the sequence ends in an assertion move 
          \begin{enumerate}
              
              \item if the assertion is a defence move $(!,F)$ then we associate the sequent $\Gamma,F\vdash \Delta $ where $\Gamma\vdash \Delta$ is the sequent associated to the prefix of the sequence 
              \item  if the assertion is an attack move $(?,F)$ against (in the game ) an assertion of $F\rightarrow C$ then we associate the sequent $\Gamma,F\vdash C,\Delta$ where $\Gamma\vdash \Delta$ is the sequent associated to the prefix of the sequence from which we have erased the formula $F\Rightarrow C$ on the right of $\vdash$
          \end{enumerate}
       \item if the sequence end in a move that is not an assertion then it should be an attack, $(?,s)$ where $s$ is an auxiliary symbol. We have three type of case 
       
       \begin{enumerate}
           \item If $s$ is either $?\vee,\wedge_1$ or $\wedge_2$ then  we associate the sequent $\Gamma\vdash \Delta'$ where $\Gamma$ is equal to the $\Gamma$ associated to the prefix of the sequence, and $\Delta'$ is the sequent $\Delta$ associated to the prefix of the sequence from which we have erased the formula $C$ asserted by the move that is attacked by $(?,s)$ and to which we have added the sub-formula  of $C$ chosen by $\mathbf{O}$ in the case of $\wedge_i$ and both subformulas of $C$ is $s$ is $\vee$
            \item if $s$ is $\forall[w/x]$ then we associate the sequent $\Gamma\vdash A[w/x],\Delta'$ where $A[w/x]$ is the subformula of the formula $\forall x A$ asserted by $\mathbf{P}$ and attacked by $(?,s)$ and $\Delta $ is obtained by erasing $\forall x A$ from  $\Delta$ in the sequent $\Gamma\vdash \Delta$  associated to the prefix of the sequence. 
            \item if $s$ is $\exists$ then we associate the sequent $\Gamma\vdash F(t),\Delta$ where $F(t)$ is the formula asserted by $\mathbf{P}$ in is defence against $(?,s)$ and $\Gamma\vdash \Delta$ is the sequent associated to the prefix of the sequence. Remark that the $\mathbf{P}$-defence must exists by the definition of strategy.  
       \end{enumerate}
   \end{enumerate}

Given a winning strategy $S$, observe that  $D(S)$  can be viewed as a (non empty) finite tree $(D(S),\prec)$ with $\prec$ defined by $\vec{(M)}\prec \vec{M'}$ if and only if $\vec{M}$ is a suffix of $\vec{M}$ --- $\prec$ is the order associated with the tree structure on $D(S)$.    

The above procedure yields  $D(S)$,  a tree node-labeled with sequents --- some object similar to a sequent calculus proof -- such that: 
\begin{itemize} 
\item 
the  leaf-labels are of the form  $\Gamma,A\vdash A,\Delta$ with $A$ an atomic proposition, 
\item 
any formula 
in a sequent labeling a node of depth $n$  is a subformula of some formula  of the sequent labeling the nodes of depth $n+1$. 
\end{itemize} 

The following lemma assures us that the labelled tree respects the variable restriction on $\forall R$ $\exists L$

\begin{lemma}
Let $S$ be a winning strategy and let  $\vec{M}$ be a sequence in $D(S)$.
Suppose that $\vec{M}$ ends in a move that is an attack against a universal quantifier $(?,\forall[w/x])$ or a defence against an existential attack $(!,A[w/x])$. Then the variable $w$ does not appears free in the sequent associated to the proper prefix of $\vec{M}$
\end{lemma}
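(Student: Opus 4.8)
The statement concerns the variable-freshness condition inherited from strategies by the labelled tree $D(S)$. The plan is to prove it by tracing where the variable $w$ comes from: in a strategy, the chosen variable of a universal attack (clause~\ref{universal} of the definition of strategy) and the chosen term of an existential defence (clause~4) are both required to be \emph{the first variable in the enumeration that does not appear in $G$}, where $G$ is the game at that point. So the key is to relate ``appears in the game $G$'' to ``appears free in the sequent associated to the proper prefix of $\vec M$''. I would establish, as the core technical claim, that for every $\vec M \in D(S)$ arising from a game $G$, every variable occurring free in the sequent $\Gamma_{\vec M}\vdash\Delta_{\vec M}$ already occurs in $G$ (either in some asserted formula or free in the choice of some universal attack). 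This is exactly the notion of ``appears in $G$'' from the definition preceding the definition of strategy.

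\textbf{Key steps.} First I would prove that auxiliary claim by induction on the length of $\vec M$ (equivalently, on the depth of the node in $D(S)$), following the case analysis of the sequent-assignment procedure (items 1--3 in the construction of $\Gamma_{\vec M}\vdash\Delta_{\vec M}$). In the base case $\vec M=\epsilon$ the sequent is just $F$, and the free variables of $F$ all occur in the opening move $M_0=(!,F)$. For the inductive step, each way of extending $\vec M$ either (a) adds to the sequent a formula that is a subformula of a formula already present, obtained by substitution of the chosen term/variable — and that chosen term, being either a term picked by $\mathbf O$ in an existential defence, a term picked by $\mathbf P$, or a variable introduced by a universal attack or existential defence, occurs in $G$ by construction — or (b) merely rearranges formulas already present, whose free variables are controlled by the induction hypothesis. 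Crucially, substitution $A[t/x]$ can only introduce free variables already occurring in $t$, and $t$ occurs in $G$. Second, with the auxiliary claim in hand, I would conclude: when $\vec M$ ends in $(?,\forall[w/x])$ or $(!,A[w/x])$, the variable $w$ is chosen by the strategy to be the first variable in the enumeration not appearing in the game $G'$ that is the proper prefix of $G$ at that point. By the auxiliary claim applied to the proper prefix $\vec M'$ of $\vec M$, every free variable of $\Gamma_{\vec M'}\vdash\Delta_{\vec M'}$ appears in $G'$; since $w$ does not appear in $G'$, it does not appear free in that sequent.

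\textbf{Main obstacle.} The delicate point is bookkeeping the correspondence between the ``erasing'' operations in the sequent-assignment (e.g.\ erasing $F\Rightarrow C$ on the right, or erasing $\forall x A$ from $\Delta$, or erasing the conjunction/disjunction $C$ that was attacked) and the game moves, so as to be sure that no free variable silently survives an erasure in a way not covered by the induction hypothesis; and to check that the variable chosen by the strategy at move $M_{2n-1}$ of the full game $G$ is indeed fresh \emph{for the game played so far}, which is the game whose $\mathbf O$-projection is the proper prefix $\vec M'$ — here one uses that a $\mathbf P$-move added between $\vec M'$ and $\vec M$ cannot introduce a fresh variable of its own in the relevant clause, since $\mathbf P$'s universal-attack and existential-defence choices are not what is at stake, and $\mathbf P$'s other choices of terms are explicitly among ``variables appearing in $G$''. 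Once that correspondence is pinned down, the freshness propagates immediately, so the real work is entirely in the auxiliary claim and its careful case analysis.
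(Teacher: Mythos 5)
The paper states this lemma without any proof at all, so there is no official argument to compare yours against; you have in effect supplied the missing proof. Your argument is correct and is the natural one. The two ingredients are exactly right: (i) the freshness clauses in the definition of a strategy (the chosen variable of an $\mathbf{O}$-universal-attack and the chosen term of an $\mathbf{O}$-existential-defence must be the first variable of the enumeration not appearing in the game played so far), and (ii) the auxiliary invariant that every variable free in $\Gamma_{\vec{M}}\vdash\Delta_{\vec{M}}$ already appears in the game $\Phi(\vec{M})$, proved by running through the cases of the sequent-assignment. The case analysis for (ii) goes through: defences and implication-attacks contribute asserted formulas; the $\wedge$, $\vee$ and $\forall$ cases contribute subformulas of asserted formulas (plus the attack variable, which by definition ``appears in the game''); and the $\exists$ case contributes the formula of the subsequent $\mathbf{P}$-defence, which is itself a move of $\Phi(\vec{M})$ and hence covered. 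Your closing observation — that the game relative to which the strategy computes freshness extends the game projecting to the proper prefix $\vec{M}'$, so freshness for the former implies freshness for the latter — is the one genuinely delicate point, and you have identified it correctly. The only thing I would ask you to make fully explicit is the statement of the auxiliary claim relative to the canonical game $\Phi(\vec{M})$ (the unique game in $S$ ending in a $\mathbf{P}$-move whose $\mathbf{O}$-projection is $\vec{M}$), since the $\exists$ clause of the sequent assignment reaches forward to a $\mathbf{P}$-move not recorded in $\vec{M}$ itself.
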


A decorated tree $D(S)$ of a strategy $S$  really looks like a derivation in the sequent calculus GKK. 

We are now ready to prove the following lemma, which clearly entails  proposition \ref{strat2proof} above:  

\begin{lemma}
To each $\Vec{M}\in D(S)$ we can associate a derivation $\pi_{\Vec{M}}$ of $\Gamma_{\Vec{M}}\vdash  \Delta_{\Vec{M}}$ --- the sequent associated to $\vec{M}$. 
\end{lemma}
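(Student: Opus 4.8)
The plan is to prove the lemma by induction on the height of the node $\vec{M}$ in the tree $(D(S),\prec)$, working from the leaves of $D(S)$ upward toward the root $\epsilon$. For the base case, a leaf of $D(S)$ corresponds to a game $G$ in $S$ which is won by $\mathbf{P}$; by the proposition characterizing $\mathbf{P}$-won games, the last move is the assertion of an atomic formula $a(t_1,\dots,t_m)$, and by the last proposition of Section~3 this atom occurs both positively and negatively in $F$, so the associated sequent has the form $\Gamma,a(t_1,\dots,t_m)\vdash a(t_1,\dots,t_m),\Delta$ — exactly an instance of the axiom $Id$. (I would also check the degenerate case where the empty sequence $\epsilon$ is itself a leaf, which cannot happen for a winning strategy since the first $\mathbf{O}$-move is always legal.) So $\pi_{\vec M}$ is just the one-node axiom derivation.

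For the induction step, I take a node $\vec{M}\in D(S)$ that is not a leaf and assume derivations $\pi_{\vec{M}'}$ have been constructed for all children $\vec{M}'$ of $\vec{M}$ in $D(S)$. The children of $\vec{M}$ in $D(S)$ are obtained by appending the next $\mathbf{O}$-move (skipping the intervening $\mathbf{P}$-move, if any). The argument then splits into cases according to what $\mathbf{P}$ did at that point and how $\mathbf{O}$ replied — equivalently, according to the shape of the $\mathbf{O}$-move that extends $\vec M$ to $\vec M'$. In each case I must check that the sequent labelling $\vec M$ follows from the sequents labelling its children by exactly one GKK rule, the active formula being the (sub)formula that $\mathbf{P}$ asserted or attacked at that step. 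Concretely: if $\mathbf{P}$ asserted $A\Rightarrow B$ (as a defence), the unique $\mathbf{O}$-reply is the attack $(?,A)$, and the two labelled sequents (the attack move, which puts $A$ on the left and $B$ on the right, but also the case where $\mathbf{P}$ later re-asserts — here the structural bookkeeping of the association rules 2(b) and the erasure of $A\Rightarrow B$ on the right) assemble into an instance of $\Rightarrow R$, or of $\Rightarrow L$ when $\mathbf P$'s move was instead an attack on a conditional conceded by $\mathbf O$. If $\mathbf{P}$ asserted $A\wedge B$, the children correspond to the two attacks $?\wedge_1$ and $?\wedge_2$, giving $\wedge R$; if $\mathbf{P}$ asserted $A\vee B$, the single attack $?\vee$ with both disjuncts added on the right gives $\vee R$; dually, $\mathbf P$ attacking a conceded conjunction or disjunction yields $\wedge L_i$ or $\vee L$. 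For $\forall x A$ asserted by $\mathbf{P}$, the $\mathbf{O}$-attack $(?,\forall[w/x])$ introduces $A[w/x]$ on the right and, by clause~3 of the strategy definition together with the preceding lemma on variable freshness, $w$ is fresh in the premiss sequent — so this is a legitimate instance of $\forall R$; the existential cases $\exists L$ and the left-rule $\forall L$ / right-rule $\exists R$ go similarly, using clauses 4 and 5 of the strategy definition (in particular that after an $\mathbf O$-attack on an existential $\mathbf P$ must immediately defend, which is what makes the $\exists R$ principal formula survive in the premiss). Finally, if $\mathbf{P}$'s move is not an assertion and not one of the quantifier attacks, the passage from $\vec M$ to its unique child changes nothing relevant and the derivation is inherited unchanged.

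The main obstacle I expect is purely bookkeeping: verifying that the sequent-association recipe (clauses 1–3 in the construction of $\Gamma_{\vec M}\vdash\Delta_{\vec M}$) interacts correctly with the GKK rules in every case, especially tracking which formula is erased from the right-hand side and which is added, and making sure the ``extra'' copies of active formulas that GKK keeps in premisses (e.g. $A\Rightarrow B$ still present above $\Rightarrow L$, $\forall x A$ still present above $\forall L$, $\exists x A$ still present above $\exists R$ and $\exists L$) match the reprises and existential repetitions permitted by the game rules. The freshness side-condition for $\forall R$ and $\exists L$ is the one genuinely non-syntactic point, but it has already been isolated in the lemma just stated, so here I only need to invoke it. Assembling the pieces: given the root label is $F$ (by clause~1, the sequent associated to $\epsilon$ is $\vdash F$), the derivation $\pi_\epsilon$ produced by the induction is a proof of $F$ in GKK, which is Proposition~\ref{strat2proof}.
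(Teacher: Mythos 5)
Your proposal takes essentially the same route as the paper: well-founded induction on $(D(S),\prec)$ from the leaves (instances of $Id$, since $\mathbf{P}$'s final atomic assertion is a reprise) up to the root, with a case split on the interpolated $\mathbf{P}$-move and an appeal to the freshness lemma for the $\forall R$/$\exists L$ side conditions. The one point to tighten is your claim that each node follows from its children by \emph{exactly one} GKK rule: in precisely the cases the paper singles out as non-straightforward --- $\mathbf{P}$ attacking $A\Rightarrow C$ with $A$ non-atomic, and existential repetitions --- two rules are needed (the left premise of $\Rightarrow L$ is itself derived by a right rule from the counterattack child, and $\exists R$ must be followed by the right rule for the repeated instance), though your closing bookkeeping remarks show you anticipated this.
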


\begin{proof}
By well-founded induction on $(D(S),\prec)$. Suppose that for each suffix $\Vec{M}$ of $\vec{M'}$ the proposition holds. We associate a derivation to $\Vec{M'}$ by considering the last move of $M_{2_n}$ $\Phi(\Vec{M})$. Where $\Phi{\vec{M}}$ is the unique game $G$ in $S$ ending in a $\mathbf{P}$-move such that $\vec{M}$ is obtained from $G$ by  erasing $\mathbf{P}$-moves. 

We only present here some non straightforward cases: 
\begin{enumerate}
    \item if $M_{2_n}$ is an attack $(?,A)$ on the assertion $A\Rightarrow C$ depending on the form of $A$
    \begin{itemize}
       
        \item if $A$ is atomic then the immediate suffix of $\vec{M}$ is $\Vec{M}(!,C)$ for which the proposition hold by hypothesis. We associate it with the following derivation 
        
   \begin{footnotesize}
        
        \begin{prooftree}
        \AxiomC{}
        \UnaryInfC{$\Gamma, A\Rightarrow C,A\vdash A$}
        \AxiomC{$\vdots \pi_{\Vec{M}(!,C)}$}
        \noLine
        \UnaryInfC{$\Gamma,A\Rightarrow  C,C\vdash F$}
        \BinaryInfC{$\Gamma,A\Rightarrow C\vdash F$}
        \end{prooftree}
      \end{footnotesize}
       \item if $A=(A_1{\Rightarrow}A_2)$  then $\Vec{M}$ has  two immediate  suffixes namely $\vec{M},(?,A_1)$ and $\Vec{M},(!,C)$, for which the proposition holds by hypothesis.  We associate the following derivation: 
       
        \begin{footnotesize}
        \begin{prooftree}
        \AxiomC{$\vdots \pi_{\Vec{M'}(?,A_1)}$}
        \noLine
        \UnaryInfC{$\Gamma,(A_1\Rightarrow A_2)\Rightarrow C,A_1\vdash A_2$}
        \UnaryInfC{$\Gamma,(A_1\Rightarrow A_2)\Rightarrow C\vdash A_1\Rightarrow A_2$}
        \AxiomC{$\vdots\pi_{\Vec{M'}(!,C)}$ }
       \noLine \UnaryInfC{$\Gamma,(A_1 \Rightarrow A_2)\Rightarrow C,C\vdash F$}
        \BinaryInfC{$\Gamma,(A_1 \Rightarrow A_2)\Rightarrow C\vdash F$}
        
        \end{prooftree}
        \medskip
        \medskip
         \end{footnotesize}
         
         \item  If $M_{2n}$ is an existential repetition asserting a formula $F[t/x]$ we proceed as follows: we only consider  the case where $F[t/x]:=(F_1\Rightarrow F_2)[t/x]$,  By induction hypothesis there is  derivation of the sequent $\Gamma F_1\vdash \exists x F_1[t/x]\Rightarrow F_2[t/x],\Delta$ associated to the direct suffix $\Vec{M}(?,F_1)$ of $\Vec{M}$. We thus associate the following derivation 

\begin{footnotesize}
 \begin{prooftree}
 \AxiomC{$\vdots\pi_{\vec{M}(?,F_1)}$}
 \noLine
 \UnaryInfC{$\Gamma F_1[t/x]\vdash \exists x (F_1 \Rightarrow F_2),F_2[t/x] ,\Delta$}\RightLabel{$\Rightarrow R$}
 \UnaryInfC{$\Gamma\vdash \exists x (F_1 \Rightarrow F_2), (F_1\Rightarrow F_2)[t/x],\Delta$ }\RightLabel{$\exists R$}
 \UnaryInfC{$\Gamma\vdash \exists x( F_1\Rightarrow F_2),\Delta $}
 \end{prooftree}
\end{footnotesize}

        \end{itemize}
  \end{enumerate}      
    
\end{proof}

This proves   proposition \ref{strat2proof} and thus assures us that that if a formula is dialogically valid then it is provable in sequent calculus, hence true in any interpretation.

\subsection{From Derivations to Strategies}

We have just shown that if a formula is dialogically valid  then it is provable in sequent calculus GKK. We now show the converse, by turning a GKK sequent calculus derivation into a winning a strategy, but we shall impose a restriction on GKK derivations --- a restriction which derives exactly the same sequents.   

Indeed not all derivations in GKK are the image of some winning strategy. For instance,  the two derivations below where $c(x),a,b$ are atomic formulas  are not the image of any winning  strategy $S$ although there are winning strategies for the two formulas (bold formulas are active occurences of formulas in the sequent):  

\medskip
\medskip

\begin{footnotesize}

\begin{minipage}{0.40\textwidth}

\begin{prooftree}

\AxiomC{$c(x)\vdash \mathbf{c(x)}$}
\UnaryInfC{$\mathbf{\forall x c} \vdash c(x)$}
\UnaryInfC{$\forall x c \vdash \mathbf{\exists x c}$}
\end{prooftree}
\end{minipage}

\begin{minipage}{0.49\textwidth}
\begin{prooftree}

\AxiomC{$a\Rightarrow b, a\vdash \mathbf{a},b$}
\UnaryInfC{$a\Rightarrow b\vdash a, \mathbf{a\rightarrow b}$}
\AxiomC{$b,a\Rightarrow b,a\vdash \mathbf{ b}$}
\UnaryInfC{$b,a\Rightarrow b\vdash \mathbf{a\Rightarrow b}$}
\BinaryInfC{$\mathbf{a\Rightarrow b} \vdash a\Rightarrow b$}
\UnaryInfC{$\vdash (a\Rightarrow b)\Rightarrow (a\Rightarrow b)$}

\end{prooftree}
    \end{minipage}

\end{footnotesize}

 \medskip

This leads us to restrict proofs of GKK to strategic proofs  which derive the same sequents but always correspond to winning strategies, and to proceed as follows: 
 
  \begin{itemize}
      \item  We first give a very informal description of the procedure that we use to transform a derivation into a strategy 
      
      \item By looking on how the derivation should be made in order for the procedure to be successful we define a subclass of derivations of $GKK$ called $\emph{strategic derivations}$
      \item We show that the subclass is complete, in the sense that if the sequent $\Gamma\vdash \Delta$ is provable then it corresponds to a strategic derivation. 
  \end{itemize}

  Assume we already 
   converted a derivation into a tree of moves, from conclusion to premises,  up to a certain point by associating with each active formula, a set of sequences of moves.  Assume  that for each node $a$ of depth $n$ in the derivation, the branch of the derivation from $a$ to the root is associated with a game $G$ for the formula that is the root of the derivation. Let $a_1,\cdots, a_k$ be the the sons of $a$ having depth $n+1$. 
   
   We proceed as follows, starting with $a_1$
  \begin{enumerate}
      \item if $a_1$ bears a sequent in which the active formula $A$ is on the right then it is obtained by a right rule or an id rule. Add to $G$ one move by $\mathbf{P}$ that asserts $A$ obtaining $G'$. If these last assertion can be attacked by $O$ enumerate the attacks $M_i$ such that $G',M_i$ is a game and obtain $i$ sequences of moves that are games  
      \item if $a_1$'s label is a sequent in which the active formula $A$ is on the left then $a_1$ is obtained by a left rule. Add to 
      $G$ one move by $\mathbf{P}$ that consist in attacking $A$ obtaining a game $G'$ (if a is a conjunction or a universal quantifier look at the premise of $a'$ in order to find the right conjunct/term). Continue by extending $G'$ to (respectively) $G'M_1,\ldots G',M_n$ where the $M_i$ are either defences against the last move of $G'$ or counter attack and are all $\mathbf{O}$-moves
  \end{enumerate}
  
Such a procedure clearly ends. 

  If in $(1)$ the active formula is an existentially quantified formula $\exists x B $ then $\mathbf{P}$ asserts the formula and next it is attacked by $\mathbf{O}$ with $(?,\exists)$. By  definition of a strategy, $\mathbf{P}$ has to asserts $B[t/x]$. This means that $a_1$ should have just one son $a_2$ in which the formula $B[t/x]$ is active. 
  
  A similar  situation occurs in (2) when the active formula is a conditional $A\Rightarrow B$: $\mathbf{P}$ has to assert $A$, so $A$ must be the active formula of the left premise of the $\Rightarrow L$ rule.
  
  In order to overcome this problem, we introduce the following definition: 
  
 \begin{definition} (Strategic derivations) 
  A derivation $\pi$ in $GKK$ is said to be  strategic whenever the two conditions below hold:   
 \begin{itemize} 
 
 \item 
  
  for each application of a left rule, the formula occurrence $A$ in the left-hand premise is active.
  \begin{footnotesize}

  \begin{prooftree}
\AxiomC{$\Gamma,A\Rightarrow B\vdash A,\Delta$}
\AxiomC{$\Gamma,A\Rightarrow B, B\vdash \Delta$}\RightLabel{$\Rightarrow L$}
\BinaryInfC{$\Gamma, \mathbf{A\Rightarrow B}\vdash \Delta$}

\end{prooftree}

\end{footnotesize}

\item  for each $\exists R$ rule application the formula occurence $A[t/x]$ is active in the premise: 

\begin{footnotesize}

\begin{prooftree}
\AxiomC{$\Gamma \vdash \exists x A, A[t/x],\Delta$}
\UnaryInfC{$\Gamma \vdash \mathbf{ \exists x A},\Delta $}
\end{prooftree}

  \end{footnotesize}

\end{itemize} 
\end{definition}  
  
 \begin{proposition}
 If $\pi$ is a strategic derivation of $F$ then the procedure above outputs a winning strategy for $F$
 \end{proposition}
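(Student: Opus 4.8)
The plan is to prove, by induction on the structure of $\pi$, a statement general enough to be applied to the immediate subderivations above each inference. Write $\Gamma \vdash \Delta$ for the conclusion sequent of a subderivation $\pi'$ of $\pi$. I claim: for every strategic $\pi'$ and every game $G$ for $F$ that either is the opening move $(!,F)$ or ends in an $\mathbf{O}$-move and that ``realizes'' $\Gamma \vdash \Delta$ in the sense that the formulas of $\Gamma$ are exactly those asserted by $\mathbf{O}$ so far, the formulas of $\Delta$ are exactly the $\mathbf{P}$-assertions of $G$ still open to attack together with $\mathbf{P}$'s pending defence obligations, and the variables occurring in $G$ are exactly the free variables of $\Gamma \vdash \Delta$, the procedure run on $\pi'$ from $G$ produces a set of games each extending $G$, prefix-closed above $G$, satisfying clauses 1--5 of the definition of a strategy, and all of whose maximal games are won by $\mathbf{P}$. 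Instantiating this with $\pi' = \pi$ and $G = (!,F)$ --- which realizes $\vdash F$ --- yields the proposition. This is essentially the dictionary already used in the preceding subsection, now read backwards; note that because $\mathbf{P}$-moves are transient they need not all survive in $\Delta$, which is what keeps the two-sided sequent bounded.

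For the inductive step I would go through the rule applied at the root of $\pi'$. When its active formula is on the right ($\Rightarrow R$, $\wedge R$, $\vee R$, $\forall R$, $\exists R$, or the axiom $Id$), the realization invariant makes $\mathbf{P}$ entitled to assert it; the procedure does so, and $\mathbf{O}$'s legal attacks on that assertion are in bijection with the premises of the rule --- the two attacks $?\wedge_1, ?\wedge_2$ for $\wedge R$, the single antecedent-attack for $\Rightarrow R$ (which moves the antecedent left and the consequent right), the single attack $?\vee$ for $\vee R$ (whose continuation carries both disjuncts on the right), and the attack $?\forall[y/x]$ for $\forall R$, where the eigenvariable condition is exactly the freshness requirement of clause 3 on $\mathbf{O}$'s chosen variable. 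When the active formula is on the left ($\Rightarrow L$, $\wedge L_i$, $\vee L$, $\forall L$, $\exists L$) the invariant says $\mathbf{O}$ asserted it, so $\mathbf{P}$ attacks it --- reading the index $i$ off $\wedge L_i$ and the instantiating term off the premise of $\forall L$ --- and $\mathbf{O}$'s legal replies (a defence, or for $\Rightarrow L$ also a counterattack) are again in bijection with the premises: one defence for $\wedge L_i$ and $\forall L$, the two defence-choices for $\vee L$, the fresh-witness defence for $\exists L$ (matching clause 4 and the eigenvariable condition), and the pair ``$\mathbf{O}$ defends with $B$'' / ``$\mathbf{O}$ counterattacks the asserted antecedent'' for $\Rightarrow L$. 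In each case one checks that the added moves instantiate the attack/defence table and keep the sequence a legal game, that the games attached to the premises again realize those premise sequents (so the induction hypothesis applies to the subderivations above), and that clause 2 holds because the procedure commits $\mathbf{P}$ to a single, rule-determined move at each branching; the variable part of the invariant is handled exactly as in the variable lemma of the preceding subsection, and the procedure terminates because each step climbs one level of the finite tree $\pi$.

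The genuinely delicate cases --- and the only places where the hypothesis that $\pi$ is \emph{strategic} enters --- are $\Rightarrow L$ and $\exists R$. Attacking a left implication $A \Rightarrow B$ forces $\mathbf{P}$'s move to be the assertion of $A$, and after $\mathbf{P}$ asserts $\exists x A$ and $\mathbf{O}$ plays $(?,\exists)$, clause 5 forces $\mathbf{P}$ to defend at once with $A[t/x]$ for the term $t$ prescribed by the $\exists R$ inference; in either situation the derivation can stay in step with the game only if the very next inference operates on that forced formula, which is exactly what the two clauses defining a strategic derivation guarantee --- on a non-strategic derivation the procedure stalls, as the two displayed counterexamples show. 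The retention of $\exists x A$ in the premise of $\exists R$ (and of $\forall x A$ in that of $\forall L$) is what leaves room for $\mathbf{P}$ to come back later with a further witness, i.e.\ for the existential repetitions the game rules allow. Finally, for the winning condition: every leaf of $\pi$ is an instance of $Id$, say $\Gamma, A \vdash \mathbf{A}, \Delta$ with $A$ atomic; the procedure has $\mathbf{P}$ assert this $A$, and since $A \in \Gamma$ the invariant tells us some earlier $\mathbf{O}$-move already asserted $A$, so this last $\mathbf{P}$-move is a reprise, hence legal, and it closes a finite game with no legal $\mathbf{O}$-reply --- atomic assertions admit no attack, and an $\mathbf{O}$-move must be enabled by the immediately preceding move. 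Hence every maximal game produced by the procedure is won by $\mathbf{P}$, and the output is a $\mathbf{P}$-winning strategy for $F$, as claimed.
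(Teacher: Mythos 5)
First, a point of reference: the paper does not actually prove this proposition. It is asserted immediately after the informal two-case description of the procedure and the definition of strategic derivations, and the text moves straight on to the completeness lemma for strategic derivations. So your proposal is not an alternative to the paper's proof but a candidate for the missing one, and the shape you chose --- induction on $\pi$ with a ``realization'' invariant tying the sequent $\Gamma\vdash\Delta$ to the $\mathbf{O}$-assertions, the open $\mathbf{P}$-obligations and the variables of the game --- is exactly the forward translation of the previous subsection read backwards, which is clearly what the authors intend. Your identification of where strategicity is used ($\Rightarrow L$ and $\exists R$, matching the forced moves imposed by the attack table and by clause 5 of the strategy definition) coincides with the paper's own motivating discussion.

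There is, however, one step that fails as stated. You claim that $\mathbf{O}$'s legal replies are ``in bijection with the premises'' of each rule, and that every $Id$ leaf ``closes a finite game with no legal $\mathbf{O}$-reply.'' Both claims break for $\Rightarrow L$ when the antecedent $A$ is atomic. There the procedure has $\mathbf{P}$ attack $A\Rightarrow B$ by asserting $A$; $\mathbf{O}$ cannot counterattack an atomic assertion, so the left premise $\Gamma,A\Rightarrow B\vdash \mathbf{A},\Delta$ corresponds to no $\mathbf{O}$-continuation at all --- it must be an $Id$ axiom, and its only role is to certify that $A\in\Gamma$, i.e.\ that $\mathbf{P}$'s atomic attack is a reprise and hence a legal move (condition 2 of the definition of games). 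Meanwhile $\mathbf{O}$ \emph{does} have a legal reply to that very position, namely the defence asserting $B$, which is accounted for by the right premise; so the game through that leaf is not maximal and is not ``closed'' there. You therefore need to split the leaf analysis: an $Id$ leaf either terminates a maximal game (when $\mathbf{P}$'s atomic assertion is a defence, or an attack admitting no response), or merely licenses a reprise inside a game that continues into a sibling premise. This is the same asymmetry the paper exploits in the opposite direction, where the atomic-antecedent case of $\Rightarrow L$ is given a bare axiom as left premise. The fix is local and does not disturb the rest of your induction, but without it the ``every maximal game is won by $\mathbf{P}$'' conclusion is not actually established by the argument you wrote.
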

 
 Given these last proposition we can conclude our proof by the following 
 
 \begin{lemma}
For any multiset of formulas $\Gamma,\Delta$ there is a derivation of the sequent $\Gamma\vdash \Delta$ iff and only iff there is a strategic derivation of $\Gamma\vdash \Delta $
 \end{lemma}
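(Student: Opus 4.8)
The equivalence splits into two implications. From right to left there is nothing to prove: a strategic derivation is in particular a derivation. The whole difficulty is the converse, and the plan is to turn an arbitrary GKK derivation $\pi$ of $\Gamma \vdash \Delta$ into a strategic one with the same conclusion by a sequence of local rule permutations, much in the spirit of the focalization argument for LKQ in \cite{Her95A}. Two structural facts about GKK are used throughout, both established exactly as for GKm (see \cite{TvD96basicPT}): first, the right rules $\wedge R$, $\vee R$, $\Rightarrow R$ and $\forall R$ are height-preserving invertible, and from any derivation of $\Gamma \vdash \exists x A, \Delta$ one can extract, for a suitable term $t$, a derivation of $\Gamma \vdash \exists x A, A[t/x], \Delta$ of no greater height; second, weakening and contraction are height-preserving admissible. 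Note also that GKK copies the principal formula of each left rule into its premises, so $A \Rightarrow B$ and $\exists x A$ are never destroyed by moving upward in a proof, hence a left rule that is applicable stays applicable higher up.

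The transformation then operates on a lowest inference of $\pi$ that violates one of the two strategic conditions. Suppose it is an application of $\Rightarrow L$ with principal formula $A \Rightarrow B$ whose left premise $\Gamma', A \Rightarrow B \vdash A, \Delta'$ is derived by a sub-derivation $D$ not acting on the displayed occurrence of $A$. If $D$ ends with a right rule acting on some $C$ in $\Delta'$, one permutes that rule below the $\Rightarrow L$ inference --- legitimate since $C$ also lies in the context of $\Rightarrow L$ --- reshaping the right-premise derivation by invertibility; the new left sub-derivation is a proper part of $D$, hence strictly shorter. If instead the displayed $A$ is never touched in $D$ (in particular if $A$ is atomic and the branches of $D$ close by $Id$ on other atoms), then $A$ is unused on the right, so by admissibility of weakening $\Gamma', A \Rightarrow B \vdash \Delta'$ is already derivable, the $\Rightarrow L$ inference is superfluous, and it is removed. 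An offending $\exists R$ inference, with premise $\Gamma \vdash \exists x A, A[t/x], \Delta$ whose sub-derivation does not act on $A[t/x]$, is handled symmetrically: either push down the right rule that eventually decomposes $A[t/x]$, or delete the $\exists R$ inference when $A[t/x]$ is unused. Each step either eliminates a strategic violation or strictly decreases a well-founded measure --- say the number of violations, then the multiset of heights of the sub-derivations rooted at them, ordered lexicographically --- and no step creates a new violation below, because right rules are never violations and inversion leaves the principal formula of every left rule unchanged. Hence the process terminates in a strategic derivation. The eigenvariable side-conditions on $\forall R$ and $\exists L$ survive all these permutations, exactly as in the rule-permutation analysis of GKm, so the quantifier rules cause no extra trouble.

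The point I expect to be the main obstacle is the bookkeeping around the existential rule: when $A[t/x]$ turns out to be unused one must argue carefully that the witness $t$ really was inessential, so that the $\exists R$ inference can be dropped without leaving $\exists x A$ unjustified; and when it is used one must locate the inference that decomposes it and bring it all the way down past the already-strategic part of the proof without disturbing it. This is precisely where focalization-style completeness proofs are delicate, and the ancestor-tracking that settles it is also what validates the termination measure above. A semantic route is available as well: since GKK is sound and complete for classical validity, it suffices to show that the strategic fragment remains complete, which a proof-search procedure producing only strategic derivations --- together with countermodel extraction from a failed branch --- would give; but the syntactic transformation is more self-contained and yields the derivation explicitly.
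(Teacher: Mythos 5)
Your overall strategy---the trivial right-to-left direction, then a syntactic permutation argument for the converse that leans on the LKQ analysis of \cite{Her95A} for the propositional combinatorics---is essentially the paper's, which proceeds by structural induction on the GKK derivation and only details the $\exists R$ case. The genuine gap sits exactly at the point you yourself flag as the main obstacle, and it is caused by the direction in which you permute. For an offending $\exists R$ you propose to ``locate the inference that decomposes $A[t/x]$ and bring it all the way down'' to just above that $\exists R$. Take the paper's hard case: $A[t/x]=\exists y\, B(t,y)$, the inner $\exists R$ that makes it active uses a witness $t'$, and between that inner $\exists R$ and the offending outer $\exists R$ there is a $\forall R$ whose eigenvariable occurs in $t'$. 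This configuration is perfectly legal, because the witness only ever appears \emph{above} the $\forall R$; but the inner $\exists R$ cannot be permuted below that $\forall R$, since doing so would put the eigenvariable into the conclusion of the $\forall R$ and violate its side condition. So your blanket claim that ``the eigenvariable side-conditions survive all these permutations'' is false for the downward move you rely on. The paper resolves the case by permuting in the opposite direction: the \emph{offending} $\exists R$ is floated \emph{upwards} until it sits immediately below the inference in which $A[t/x]$ is active. That move is always sound---$\exists R$ carries no eigenvariable condition and its principal formula $\exists x A$ persists in its premise---and it is the entire content of the quantifier case.

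Two lesser remarks. Your case split for $\Rightarrow L$ (the left sub-derivation ends with a right rule on some other formula, or the displayed $A$ is never touched) omits the case where that sub-derivation ends with a \emph{left} rule while $A$ is decomposed higher up; it requires the same upward floating of the offending inference. Conversely, your observation that an occurrence which is never active can be discharged by admissible weakening, so that the offending inference may simply be deleted, is a legitimate extra case that the paper passes over in silence; it is harmless, but it is not where the difficulty lies.
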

 
 \begin{proof}
 The direction from right to left is straightforward:  each strategic derivation is a derivation in GKK. 
 
 The other direction results from a structural induction on the derivation $\pi$ in GKK. All cases are 
straightforward except when $\pi$ is obtained by $\pi'$ by the application of a $\Rightarrow L$ rule or an $\exists R$ rule. 
Let us discuss the $\exists R$ rule, which together with a similar result in  \cite{Her95A} dealing with all the propositional cases entail our proposition.

If $\pi $ ends in a $\exists R$ rule application then, by induction hypothesis there is a strategic derivation $\pi_1$ of its premise $\Gamma\vdash A[t/x], \exists x A, \Delta$. if $A[t/x]$ is active we are done. If not we can suppose, without loss of generality that the rule application in which $A[t/x]$ is active is just above the last rule $R$ of $\pi_1$. The "hard" case is when $R$ is a $\forall R$ rules, $A[t/x]= \exists y B (t,y) $ i.e. $\pi_1$ has the following shape:  

\begin{footnotesize}
\begin{prooftree}
\AxiomC{$\vdots$}
\noLine
\UnaryInfC{$\Gamma\vdash B(t,t'),A(y)$}
\UnaryInfC{$\Gamma\vdash \exists y B(t,y),D(y)$}
\UnaryInfC{$\Gamma\vdash \exists B(t,y),\forall w D$}

\end{prooftree}
\end{footnotesize}

The problem being that the term $t'$ can contain a free occurrence of $y$. In this case we let permute the $\exists R$ upwards in this way:

\begin{footnotesize}

\begin{prooftree}

\AxiomC{$\vdots$}
\noLine
\UnaryInfC{$\Gamma\vdash B(t,t'),A(y),\Delta'$}
\UnaryInfC{$\Gamma\vdash \exists y B(t,y),D(y),\Delta$}
\UnaryInfC{$\Gamma\vdash \exists B(t,y),D(y),\Delta'$}
\UnaryInfC{$\Gamma\vdash \exists x A, D(y),\Delta'$}
\UnaryInfC{$\Gamma\vdash \exists x A, \forall w D, \Delta' $}

\end{prooftree}

\end{footnotesize}
That way we obtain a strategic proof we wanted. 
 \end{proof}

  This concludes our proof of the equivalence between winning strategies for our dialogical games and the existence of a  proof in classical logic (here viewed, without lost of generality, as a strategic GKK proof).

\section{Categorial Grammars and Automated Theorem Proving}

Type-logical grammars are a family of frameworks for the analysis of natural language based on logic and type theory. Type-logical grammars are generally fragments of intuitionistic linear logic, with the Curry-Howard isomorphism of intuitionistic logic serving as the syntax-semantics interface. Figure~\ref{fig:arch} shows the standard architecture of type-logical grammars.

\begin{enumerate}
\item given some input text, a \emph{lexicon} translates words into formulas the result is a judgment in some logical calculus, such as the Lambek calculus or some variant/extension of it,
    \item the grammaticality of a sentence corresponds to the provability of this statement in the given logic (where different proofs can correspond to different interpretations/readings of a sentence),
    \item there is a forgetful mapping from the grammaticality proof into a proof of multiplicative, intuitionistic linear logic,
    \item by the Curry-Howard isomorphism, this produces a linear lambda term representing the derivational meaning of the sentence (that is, it provides instructions for how to compose the meanings of the individual words),
    \item we then substitute complex lexical meanings for the free variables corresponding to the lexical entries to obtain a representation of the logical meaning of the sentence,
    \item finally, we use standard theorem proving tools (in first- or higher-order logic) the compute entailment relations between (readings of) sentences.
\end{enumerate}

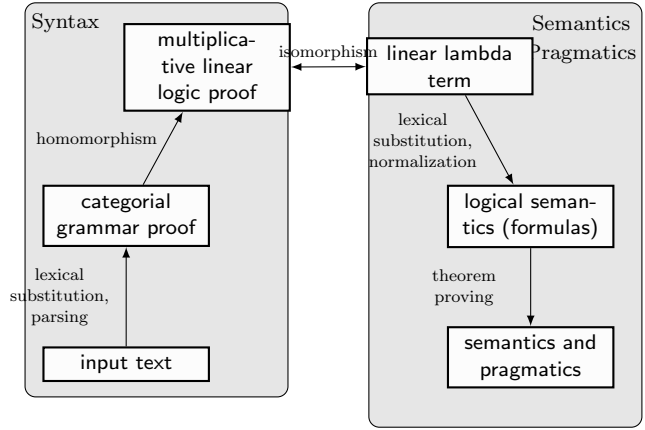
\begin{figure}
\begin{center}
\begin{tikzpicture}[scale=.9,event/.style={rectangle,thick,draw,fill=gray!02,text width=2cm,
		text
                centered,font=\sffamily,anchor=north},label/.style={scale=.8,text
                width=2cm, text
                centered},smalllabel/.style={scale=.55,text
                width=1.5cm,text centered}]
\draw [rounded corners,fill=gray!20] (-5.0em,-9.5em) rectangle (8.0em,10.0em);
\node (syn) at (-3.0em,9.0em) {Syntax};
\draw [rounded corners,fill=gray!20] (12.0em,-11.0em) rectangle (25.5em,10.0em);
\node (sem) at (22.5em,9.0em) {Semantics};
\node (prag) at (22.5em,7.5em) {Pragmatics};
\node [event] (text) at (0em,-7em) {input text};
\node [event] (cg) at (0em,1em) {categorial grammar proof};
\node [event] (mll) at (4em,9em) {mul\-ti\-pli\-ca\-tive linear logic proof};
\node [event] (ll) at (16em,8.3em) {linear lambda term};
\node [event] (sem) at (20em,1em) {logical semantics (formulas)};
\node [event] (prag) at (20em,-6em) {semantics and pragmatics};
\path[>=latex,->] (cg) edge node[left] [label] {{\small homomorphism\qquad
    \qquad \ \ \ \ \ \ \ \qquad \qquad }\ } (mll);
\path[>=latex,<->] (mll) edge node[above] [label] {\small isomorphism} (ll);
\path[>=latex,->] (ll) edge node[left] [label] {\small lexical substitution, normalization} (sem);
\path[>=latex,->] (text) edge node[left] [label] {\small lexical
  substitution, parsing} (cg);
\path[>=latex,->] (sem) edge node[left] [label] {\small theorem
  proving} (prag);
\end{tikzpicture}
\end{center}
\caption{The standard architecture of type-logical grammars}
\label{fig:arch}
\end{figure}

\begin{table}
    \centering
    \begin{tabular}{ccc}
         \infer[\bo\backslash E\bc]{\Gamma,\Delta\vdash B}{\Gamma\vdash A & \Delta\vdash A\backslash B} && 
         \infer[\bo/ E\bc]{\Gamma,\Delta\vdash B}{\Gamma\vdash B/A & \Delta\vdash A} \\ \\
         \infer[\bo\backslash I\bc]{\Gamma\vdash A\backslash B}{A,\Gamma\vdash B} &&
         \infer[\bo/I\bc]{\Gamma,B/A}{\Gamma,A\vdash B}
    \end{tabular}
    \caption{The Lambek calculus}
    \label{tab:lambek}
\end{table}

\begin{table}
    \centering
    \begin{tabular}{ccc}
         \infer[\bo\multimap E\bc]{\Gamma,\Delta\vdash (M\, N):B}{\Gamma\vdash N:A & \Delta\vdash M:A\multimap B} && 
         \infer[\bo\multimap I\bc]{\Gamma\vdash \lambda x.M:A\backslash B}{x:A,\Gamma\vdash M:B} 
    \end{tabular}
    \caption{The multiplicative intuitionistic linear logic with linear lambda term labeling}
    \label{tab:mill}
\end{table}

To make this more concrete, we present a very simple example, using the Lambek calculus. The Lambek calculus has two connectives\footnote{We ignore the product connectives `$\bullet$' here, since it has somewhat more complicated natural deduction rules and it is not used in the examples.}, $A/B$, pronounced $A$ \emph{over} $B$, representing a formula looking for a $B$ constituent to its right to form an $A$, and $B\backslash A$,  pronounce $B$ \emph{under} $A$, representing a formula looking for a $B$ constituent to its left to form an $A$. Table~\ref{tab:lambek} shows the logical rules of the calculus. 
We'll look at the French sentence `Un Su\'{e}dois a gagn\'{e} un prix Nobel' (\emph{A Swede won a Nobel prize}). 
Figure~\ref{fig:exa} shows a Lambek calculus proof of this sentence. It shows that when we assign the formula $n$, for (common) noun, to `prix' and $n\backslash n$ to `Nobel', we can derive `prix Nobel' as an $n$. Similarly, when we assign $np/n$ to `un' we can combine this with `prix Nobel' of type $n$ to produce `un prix Nobel' as a noun phrase $np$. We can continue the proof as shown in Figure~\ref{fig:exa} to show that `Un Su\'{e}dois a gagn\'{e} un prix Nobel' is a main, declarative sentence $s$.

\renewcommand{\circ}{,}
\begin{figure}
\[
\scalebox{0.6666}{\infer[\bo\backslash\textit{E}\bc]{\textrm{Un}\circ_{}\textrm{suédois}\circ_{}\textrm{a}\circ_{} \textrm{gagné}\circ_{} \textrm{un}\circ_{} \textrm{prix}\circ_{}\textrm{Nobel}\rule[-.2ex]{0pt}{.9em} \vdash s_{\textit{main}}\rule[-.2ex]{0pt}{.9em}
}{     \infer[\bo/\textit{E}\bc]{\textrm{Un}\circ_{}\textrm{suédois}\rule[-.2ex]{0pt}{.9em} \vdash np\rule[-.2ex]{0pt}{.9em}
     }{\infer[\bo\textit{Lex}\bc]{np /_{}n\rule[-.2ex]{0pt}{.9em}
          }{\textrm{Un}\rule[-.2ex]{0pt}{.9em}}
          &
\infer[\bo\textit{Lex}\bc]{n\rule[-.2ex]{0pt}{.9em}
          }{\textrm{suédois}\rule[-.2ex]{0pt}{.9em}}
     }
     &
     \infer[\bo/\textit{E}\bc]{\textrm{a}\circ_{} \textrm{gagné}\circ_{} \textrm{un}\circ_{} \textrm{prix}\circ_{}\textrm{Nobel}\rule[-.2ex]{0pt}{.9em} \vdash np \bs_{}s_{\textit{main}}\rule[-.2ex]{0pt}{.9em}
     }{\infer[\bo\textit{Lex}\bc]{(np \bs_{}s_{\textit{main}}) /_{}(np \bs_{}s_{\textit{ppart}})\rule[-.2ex]{0pt}{.9em}
          }{\textrm{a}\rule[-.2ex]{0pt}{.9em}}
          &
          \infer[\bo/\textit{E}\bc]{\textrm{gagné}\circ_{} \textrm{un}\circ_{}(\textrm{prix}\circ_{}\textrm{Nobel} \rule[-.2ex]{0pt}{.9em} \vdash np \bs_{}s_{\textit{ppart}}\rule[-.2ex]{0pt}{.9em}
          }{\infer[\bo\textit{Lex}\bc]{(np \bs_{}s_{\textit{ppart}}) /_{}np\rule[-.2ex]{0pt}{.9em}
               }{\textrm{gagné}\rule[-.2ex]{0pt}{.9em}}
               &
               \infer[\bo/\textit{E}\bc]{\textrm{un}\circ_{} \textrm{prix}\circ_{}\textrm{Nobel} \rule[-.2ex]{0pt}{.9em} \vdash np\rule[-.2ex]{0pt}{.9em}
               }{\infer[\bo\textit{Lex}\bc]{np /_{}n\rule[-.2ex]{0pt}{.9em}
                    }{\textrm{un}\rule[-.2ex]{0pt}{.9em}}
                    &
                    \infer[\bo\backslash\textit{E}\bc]{\textrm{prix}\circ_{}\textrm{Nobel}\rule[-.2ex]{0pt}{.9em} \vdash n\rule[-.2ex]{0pt}{.9em}
                    }{\infer[\bo\textit{Lex}\bc]{n\rule[-.2ex]{0pt}{.9em}
                         }{\textrm{prix}\rule[-.2ex]{0pt}{.9em}}
                         &
\infer[\bo\textit{Lex}\bc]{n \bs_{}n\rule[-.2ex]{0pt}{.9em}
                         }{\textrm{Nobel}\rule[-.2ex]{0pt}{.9em}}
                    }
               }
          }
     }
}}
\]
\caption{Lambek calculus proof of  `Un Su\'{e}dois a gagn\'{e} un prix Nobel' (\emph{A Swede won a Nobel prize}).}
\label{fig:exa}
\end{figure}

The lambda-term of the corresponding linear logic proof is $(g (u\, p))(u\, s)$ (we have simplified a bit here, treating `a gagn\'{e}' and `prix Nobel' as units).  We then substitute the lexical semantics to obtain the logical representation of the meaning of this sentence. The simple substitutions are \emph{su\'
{e}dois} for $s$ and \emph{prix\_Nobel} for $p$. The two complicated substitutions are the two occurrences of $u$ which are translated as follows.
\[
\lambda P^{e\rightarrow t}\lambda Q^{e\rightarrow t} \exists x. [(P\, x) \wedge (Q,x)]
\]
This is the standard Montague-style analysis of a generalised quantifier. It abstracts over two properties $P$ and $Q$ and states that there is an $x$ which satisfies these two properties. Because of our choice of $np$ for the quantifier (instead of a more standard higher-order type like $s/(np\backslash s)$, the type for the transitive verb has to take care of the quantifier scope. The lexical entry for the transitive verb below chooses the subject wide scope reading.
\[
\lambda N^{(e\rightarrow t)\rightarrow t}\lambda M^{(e\rightarrow t)\rightarrow t} (M\,\lambda x. (N\, \lambda y. \textit{gagner}(x,y)))
\]
Substituting these terms in into the lambda term for the derivation and normalising the resulting term produces the following.
\[
\exists x \exists y. [ \textit{su\'{e}dois}(x) \wedge \textit{prix}\_\textit{Nobel}(y) \wedge \textit{gagner}(x,y)]
\]


Even though this is an admittedly simple example, it is important to note that, although slightly simplified for presentation here, the output for this example and other examples in this paper are automatically produced by the wide-coverage French parser which is part of the Grail family of theorem provers \cite{Moot2017}: Grail uses a deep learning model to predict the correct formulas for each word, finds the best way to combine these lexical entries and finally produces a representation of a logical formula. The full Grail output for the meaning of the example sentence is shown in Figure~\ref{fig:semsuedois}. Grail uses discourse representation structures \cite{drtbook} for its meaning representation, which is essentially a graphical way to represent formulas in first-order logic. Besides providing a readable presentation of formulas, discourse representation structures also provide a dynamic way of binding, with applications to the treatment of anaphora in natural language.

\begin{figure}
\begin{center}
\includegraphics{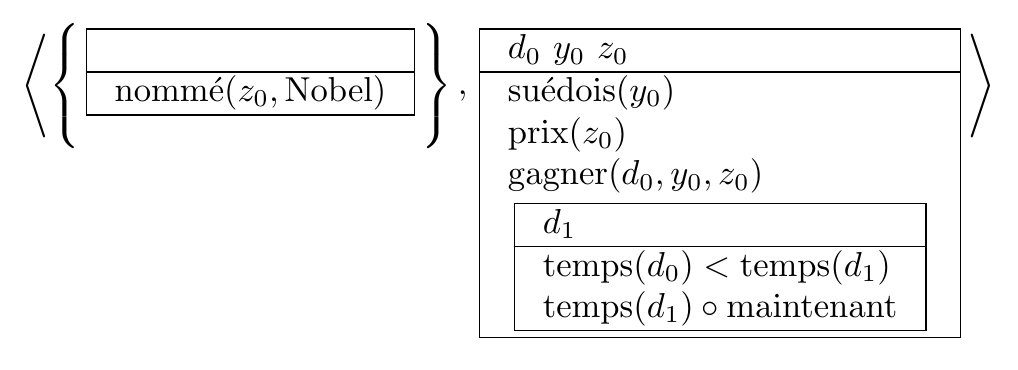}
\end{center}
    \caption{Grail output for the semantics of `Un Su\'{e}dois a gagn\'{e} un prix Nobel'}
    \label{fig:semsuedois}
\end{figure}
The variables $d_0$, $y_0$ and $z_0$ in the top part of the rightmost box represent existentially quantified variables, $y_0$ is a swede, $z_0$ is a prize (named after Nobel) and $d_0$ is a variable for an eventuality   --- essentially denoting a slice of space-time, the inner box indicates that this `winning' even must have occurred at a time before `maintenant' (\emph{now}).

Even though the meaning assigned is in some ways simplistic, the advantage is that it can be automatically obtained and that it is of exactly the right form for logic-based entailment tasks.

\section{Textual Entailment}

What can dialogical argumentation contribute to the study of textual entailment? 
In natural language processing, \emph{textual entailment} is usually defined  as a relation between text  fragments  that holds whenever the truth of one text fragments follows from another text. \emph{Textual entailment recognition} is the task of determining, given text fragments, whether the relation of textual entailment holds between these texts. 

Our examples below are taken form the FraCaS benchmark, but translated into French; This is due to the fact that our methodology involves the use of Grail and the latter is developed mainly for the French language.
The FraCaS benchmark was built in the mid 1990s; the aim was developing a general framework from computational semantics. The data set consists of problems each containing one or more statements and one yes/no-question. 
An example taken from the date set is the following 

\begin{exe}

\ex A Swede won a Nobel prize.
 \ex Every Swede is a Scandinavian.

\ex Did a Scandinavian win a Nobel prize?
[Yes]

\end{exe}

\subsection{First Example}

We illustrate our methodology to solve inference problem using examples. First of all we turn the  question (3) into an assertion i.e. 

\begin{exe}
\ex Some Scandinavian won a Nobel prize. 
\end{exe}

We then translate each sentence in french and  use Grail on each sentence in order to get a logical formula. 
In the enumeration below we report, in order: the sentence in English. A word-for-word translation, then a translation that takes into account the grammar and idiomatic of French and, finally, the logical formula that Grail outputs from the input of the latter

\begin{exe}
\ex
\gll A swede won a nobel prize \\
    Un suédois {a gagné} un nobel prix \\
    \trans Un suédois a gagné le prix Nobel 
    \trans $\exists x \exists y. [ \textit{su\'{e}dois}(x) \wedge prix\_Nobel(y) \wedge gagner(x,y)]$
    \ex 
    \gll Every swede is a scandinavian \\
    Tout suédois est un scandinave  \\
    \trans Tout suédois est scandinave
    \trans $\forall u.(\textit{suédois}(u)\Rightarrow \textit{scandinave}(u)) $
    \ex 
    \gll Some Scandinavian won a Nobel prize \\
     Un scandinave {à gagné} un Nobel prix \\
     \trans Un scandinave à gagné un prix Nobel
     \trans $\exists w.\exists z. [ (\textit{scandinave}(w)\wedge (prix\_Nobel(z)\wedge gagner(w,z))]$
    \end{exe}

 We then construct a \emph{winning strategy} for the formula $H_1\wedge H_2\ldots\wedge  \ H_n\Rightarrow C$ where each $H_i$ is the logical formula that Grail associates to each statement from the data set, and $C$ is the formula that Grail associates to the assertion obtained from the pair question-answer in the data-set. 
 
 \medskip
 \medskip
\footnotesize{ \[F= (\exists x \exists y. [ \textit{su}(x) \wedge p\_N(y) \wedge g(x,y)] \wedge \forall u. (su(u)\rightarrow sc(u)))\Rightarrow \exists w.\exists z. [ (\textit{sc}(w)\wedge (p\_N(z)\wedge g(w,z))]  \]}

\normalsize
In the above formula \emph{su} stands for \emph{suédois}, \emph{p\_N} for \emph{prix\_Nobel}, \emph{g} pour \emph{gagner} and \emph{sc} for \emph{scandinave}. A winning strategy for the formula $F$ is showed thereafter in two steps.

  \begin{center}
\begin{tikzpicture}[scale=0.8,->,level/.style={sibling distance = 2cm,
  level distance = 0.7cm}] 
\node{\footnotesize{$!,(\exists x \exists y. [ \textit{su}(x) \wedge p\_N(y) \wedge g(x,y)] \wedge \forall u. (su(u)\rightarrow sc(u)))\Rightarrow \exists w.\exists z. [ (\textit{sc}(w)\wedge (p\_N(z)\wedge g(w,z))]$}}
child{node(1){\footnotesize{$?,(\exists x \exists y. [ \textit{su}(x) \wedge p\_N(y) \wedge g(x,y)] \wedge \forall u. (su(u)\rightarrow sc(u)))$}}
child{node(2){\footnotesize{$?,\wedge_1$}}
child{node(3){\footnotesize{$!,(\exists x \exists y. [ \textit{su}(x) \wedge p\_N(y) \wedge g(x,y)]$}}
child{node(4){\footnotesize{$?,\exists$}}
child{node(5){\footnotesize{$!, \exists y. [ \textit{su}(x_0) \wedge p\_N(y) \wedge g(x_0,y)] $}}
child{node(6){\footnotesize{$?,\exists$}}
child{node(7){\footnotesize{$!, \textit{su}(x_0) \wedge p\_N(x_1) \wedge g(x_0,x_1)]$}}
child{node(8){\footnotesize{$?,\wedge_1$}}
child{node(9){\footnotesize{$!,su(x_0)$}}
child{node(10){\footnotesize{$?,\wedge_2$}}
child{node(11){\footnotesize{$!,N(x_1)\wedge g(x_0,x_1)$}}
child{node(12){\footnotesize{$?,\wedge_1$}}
child{node(13){\footnotesize{$!,N(x_1)$}}
child{node(14){\footnotesize{$?,\wedge_2$}}
child{node(15){\footnotesize{$!,g(x_0,x_1)$}}
child{node(16){\footnotesize{$?,\wedge_2$}}
child{node(17){\footnotesize{$!,\forall u. (su(u)\Rightarrow sc(u))$}}
child{node(18){$?,\forall[x_o/u]$}
child{node(19){\footnotesize{$!,su(x_0)\Rightarrow sc(x_0) $}}
child{node(20){\footnotesize{$?,su(x_0)$}}
child{node(21){\footnotesize{$!,sc(x_0)$}}
child{node(22){\footnotesize{$\exists w \exists z( (sc(w)\wedge(N(z) \wedge g(w,z)))$}}}}}}}}}}}}}}}}}}}}}}}};

 \draw[thick,blue,dotted] (2) to[out=-180,in=180](1); 
      \draw[thick,blue,dotted] (4) to[out=180,in=180](3);
      \draw[thick,blue,dotted] (6) to[out=180,in=180](5);
      
      \draw[thick,blue,dotted] (8) to[out=180,in=180](7); 
         \draw[thick,blue,dotted] (10) to[out=180,in=180](7); 
       \draw[thick,blue,dotted] (12) to[out=180,in=180](11); 
        \draw[thick,blue,dotted] (14) to[out=180,in=180](11);
         \draw[thick,blue,dotted] (16) to[out=0,in=0](1);
          \draw[thick,blue,dotted] (18) to[out=180,in=180](17);
  \draw[thick,blue,dotted] (20) to[out=180,in=180](19);
  \draw[thick,blue,dotted] (22) to[out=0,in=0](1);
         \end{tikzpicture}

          \begin{tikzpicture}[scale=1.2,->,level/.style={sibling distance = 2cm,
  level distance = 0.7cm}] 
\node(1){\footnotesize{$?,\exists$}}
child{node(2){\footnotesize{$\exists z(sc(x_0)\wedge (N(z)\wedge g(x_0,z)$}}
child{node(3){\footnotesize{$?,\exists$}}
child{node(4){\footnotesize{$!,sc(x_0)\wedge (N(x_1)\wedge g(x_0,x_1))$}}
child{node(5){\footnotesize{$?,\wedge_1$}} child{node(6){\footnotesize{$!,sc(x_0)$}}}}
child{node(7){\footnotesize{$?,\wedge_2$}} child{node(8){\footnotesize{$!,N(x_1)\wedge g(x_0,x_1)$}}
child{node(9){\footnotesize{$?,\wedge_1$}} child{node(10){\footnotesize{$N(x_1)$}}}}
child{node(11){\footnotesize{$?,\wedge_2$}}
child{node(12){\footnotesize{$!,g(x_0,x_1)$}}}}}}}}};
        \draw[thick,blue,dotted] (2) to[out=-180,in=180](1); 
      \draw[thick,blue,dotted] (4) to[out=180,in=180](3);
      \draw[thick,blue,dotted] (6) to[out=180,in=180](5);
      
      \draw[thick,blue,dotted] (8) to[out=0,in=0](7); 
         \draw[thick,blue,dotted] (10) to[out=180,in=180](9); 
       \draw[thick,blue,dotted] (12) to[out=0,in=0](11); 
  \end{tikzpicture}
\end{center}

\subsection{Second Example}

\begin{exe}
\ex Some Irish delegates finished the survey on time.
\ex Did any delegates finish the survey on time?
[Yes]
\end{exe}
The answer to the question is affirmative. This means that if (8) is true then the sentence \textit{``some delegate finished the survey on time''} must also be true.
\begin{exe}
\ex
\gll  Some Irish delegates finished the survey on time \\
    Certains irlandais délégués  {ont terminé} {l'} enquête à temps      \\
    \trans Certain délegués irlandais ont términé l'enquête à temps
    \trans $\exists x \exists y ((\textit{délegué}(x)\wedge\textit{irlandais}(x)) \wedge ( \textit{enquête}(y) \wedge \textit{terminé-à-temps}(x,y))) $
    \ex 
    \gll Some delegates finished the survey on time  \\
    Certains délégues {ont terminé} {l'} enquête à temps\\ 
    
    \trans Certains délégues ont términé l'enquête à temps 
    \trans  $\exists x \exists y (\textit{délegué}(x) \wedge ( \textit{enquête}(y) \wedge \textit{terminé-à-temps}(x,y))) $
   
\end{exe}

We have that $F_1\Rightarrow F_2$. Where \[F_1=\exists x \exists y ((\textit{délegué}(x)\wedge\textit{irlandais}(x)) \wedge ( \textit{enquête}(y) \wedge \textit{terminé-à-temps}(x,y)))\]
\[F_2=\exists x \exists y (\textit{délegué}(x) \wedge ( \textit{enquête}(y) \wedge \textit{terminé-à-temps}(x,y)))\]

\begin{center}
 \quad 
\begin{tikzpicture}[scale=1.2,->,level/.style={sibling distance = 2cm,
  level distance = 0.6cm}] 
  
  \node{\footnotesize{$\exists x\exists y(D(x)\wedge I(x))\wedge (E(y)\wedge TaT(x,y)))\rightarrow \exists w \exists z (D(w)\wedge (E(z)\wedge Tat(w,z))$}}
child{node(1){\footnotesize{$\exists x\exists y(D(x)\wedge I(x))\wedge (E(y)\wedge TaT(x,y)))$}}
child{node(2){\footnotesize{$?,\exists$}}
child{node(3){\footnotesize{$\exists y ((D(x_0)\wedge I(x_0))\wedge (E(y)\wedge Tat(x_0,y))$}}
child{node(4){\footnotesize{$?,\exists$}}
child{node(5){\footnotesize{$(D(x_0)\wedge I(x_0))\wedge (E(x_1)\wedge Tat(x_o,x_1))$}}
child{node(6){\footnotesize{$?,\wedge_1$}}
child{node(7){\footnotesize{$D(x_0)\wedge I(x_0)$}}
child{node(8){\footnotesize{$?,\wedge_1$}}
child{node(9){\footnotesize{$!,D(x_0)$}}
child{node(10){\footnotesize{$?,\wedge_2$}}
child{node(11){\footnotesize{$!,E(x_1)\wedge Tat(x_o,x_1)$}}
child{node(12){\footnotesize{$?\wedge_1$}}
child{node(13){\footnotesize{$!,E(x_1)$}}
child{node(14){\footnotesize{$?,\wedge_2$}}
child{node(15){\footnotesize{$!,Tat(x_0,x1)$}}
child{node(16){\footnotesize{$\exists w \exists z (D(w)\wedge (E(z)\wedge Tat(w,z))$}}
child{node(17){\footnotesize{$?,\exists$}}
child{node(18){\footnotesize{$ \exists z (D(x_0)\wedge (E(z)\wedge Tat(x_0,z))$}}
child{node(19){\footnotesize{$?,\exists$}}
child{node(20){\footnotesize{$ (D(x_0)\wedge (E(x_1)\wedge Tat(x_0,x_1))$}}
}}}}}}}}}}}}}}}}}}}}

child{node(21){\footnotesize{$?,\wedge_1$}}
child{node(22){\footnotesize{$D(x_0)$}}}}
child{node(23){\footnotesize{$?,\wedge_2$}} child{node(24){\footnotesize{$E(x_1)\wedge Tat(x_0,x_1)$}} child{node(25){\footnotesize{$?,\wedge_1$}}
child{node(26){\footnotesize{$!,E(x_1)$}}}}  child{node(27){\footnotesize{$?,\wedge_2$}}
child{node(28){\footnotesize{$!,Tat(x_0,x_1)$}}
}}}}

;
 \draw[thick,blue,dotted] (2) to[out=-180,in=180](1); 
      \draw[thick,blue,dotted] (4) to[out=-180,in=180](3);
      \draw[thick,blue,dotted] (6) to[out=0,in=0](5);
      
      \draw[thick,blue,dotted] (8) to[out=0,in=0](7); 
         \draw[thick,blue,dotted] (10) to[out=180,in=180](5); 
       \draw[thick,blue,dotted] (12) to[out=0,in=0](11); 
        \draw[thick,blue,dotted] (14) to[out=0,in=0](11);
         \draw[thick,blue,dotted] (16) to[out=0,in=0](1);
          \draw[thick,blue,dotted] (18) to[out=0,in=0](17);
  \draw[thick,blue,dotted] (20) to[out=180,in=180](19);
  \draw[thick,blue,dotted] (22) to[out=0,in=0](21);
   \draw[thick,blue,dotted] (24) to[out=0,in=0](23);
    \draw[thick,blue,dotted] (26) to[out=0,in=0](25);
     \draw[thick,blue,dotted] (28) to[out=0,in=0](27);

;
\end{tikzpicture}
\end{center}

\subsection{Third Example}

\begin{exe}

\ex No delegate finished the report on time 
 \ex Did any Scandinavian delegate finished the report on time? [No]

\end{exe}

In this example the answer get a negative reply.  A positive answer would be implied by the existence of a Scandinavian delegate who finished the report in the time allotted. Thus the sentence (8) plus the sentence \textit{Some Scandinavian delegate finished the report on time} should imply a contradiction. We first translate the two sentences in French and use Grail to get the corresponding logical formulas.

\begin{exe}
\ex
\gll  No delegate finished the report on time\\
   Aucun délégué {n'a terminé} le  rapport à temps \\
    \trans Aucan délégué n'a terminé le rapport à temps 
    \trans $\forall x (\textit{délégué}(x)\Rightarrow\neg \textit{terminé-le rapport-à-temps}(x))$
    \ex 
    \gll Some Scandinavian delegate finished the report on time\\
     Un  scandinave délegué  {a terminé} le rapport à temps\\
    
    \trans un délégué scandinave a terminé le rapport à temps 
    \trans $\exists x (  ( \textit{délégue}(x) \wedge \textit{scandinave}(x)) \wedge\textit{ términe-le-rapport-a-temps}(x))$ 
   
\end{exe}

    The two formulas
    
    \[F_1=\forall x (\textit{délégué}(x)\Rightarrow\neg \textit{terminé-le rapport-à-temps}(x))\]
    \[F_2= \exists x (  ( \textit{délégue}(x) \wedge \textit{scandinave}(x)) \wedge\textit{ términe-le-rapport-a-temps}(x))\]
    
    Are contradictory. So it exists a winning strategy for the formula $\neg(F_1\wedge F_2)$
    
    \medskip

    \begin{center}
 \quad 
\begin{tikzpicture}[scale=1.5,->,level/.style={sibling distance = 2cm,
  level distance = 0.7cm}] 
\node{\footnotesize{$\neg((\forall x (D(x)\Rightarrow \neg Trt(x))\wedge (\exists y((D(y)\wedge Sc(y))\wedge Trt(y)))$}}
child{node(1){\footnotesize{$(\forall x (D(x)\Rightarrow \neg Trt(x))\wedge (\exists y((D(y)\wedge Sc(y))\wedge Trt(y))$}}
child{node(2){\footnotesize{$?,\wedge_2$}}
child{node(3){\footnotesize{$!,\exists y((D(y)\wedge Sc(y))\wedge Trt(y))$}}
child{node(4){\footnotesize{$?,\exists$}}
child{node(5){\footnotesize{$(D(w)\wedge Sc(w))\wedge Trt(w)$}}
child{node(6){\footnotesize{$?,\wedge_1$}}
child{node(7){\footnotesize{$!,\forall x(D(x)\Rightarrow \neg Trt(x))$}}
child{node(8){\footnotesize{$?,\forall[w/x]$}}
child{node(9){\footnotesize{$!,D(w)\Rightarrow \neg Trt(w) $}}
child{node(10){\footnotesize{$?,\wedge_1$}}
child{node(11){\footnotesize{$!,D(w)\wedge Sc(w)$}}
child{node(12){\footnotesize{$?,\wedge_1$}}
child{node(13){\footnotesize{$!,D(w)$}}
child{node(14){\footnotesize{$?,\wedge_2$}}
child{node(15){\footnotesize{$!,Trt(w)$}}
child{node(16){\footnotesize{$?,D(w)$}}
child{node(17){\footnotesize{$!,\neg Trt(w)$}}
child{node(18){\footnotesize{$?,Trt(w)$}}
}}}}}}}}}}}}}}}}}} ;
  \draw[thick,blue,dotted] (2) to[out=-180,in=180](1); 
      \draw[thick,blue,dotted] (4) to[out=-180,in=180](3);
      \draw[thick,blue,dotted] (6) to[out=0,in=0](1);
      \draw[thick,blue,dotted] (8) to[out=0,in=0](7); 
         \draw[thick,blue,dotted] (10) to[out=180,in=180](5); 
       \draw[thick,blue,dotted] (12) to[out=0,in=0](11); 
        \draw[thick,blue,dotted] (14) to[out=0,in=0](5);
         \draw[thick,blue,dotted] (16) to[out=0,in=0](7);
          \draw[thick,blue,dotted] (18) to[out=0,in=0](17);

\end{tikzpicture}
\end{center}

\subsection{Fourth Example}

In the last example we focus on a series of sentences that our system should not solve, because the question asked neither has a positive nor a negative answer.

\begin{exe}

\ex A Scandinavian won a Nobel prize.

\ex Every Swede is a Scandinavian
\ex Did a Swede win a Nobel prize? [Don't know]

\end{exe}

This means that, on the basis of the information in our possession, we can neither say that a Swede has won a Nobel Prize nor that there are no Swedes who have won a Nobel Prize. 

\begin{exe}
\ex 
\gll A Scandinavian won a Nobel prize\\ 
 Un scandinave {a gagné} un Nobel prix \\
 
 \trans Un scandinave à gagne un prix Nobel
 \trans $\exists x \exists y (scandivane(x)\wedge (prix\_Nobel(y) \wedge gagne(x,y)))$

 \ex 
    \gll Every swede is a scandinavian \\
    Tout suédois est un scandinave  \\
    \trans Tout suédois est scandinave
    \trans $\forall u.(\textit{suédois}(v)\Rightarrow \textit{scandinave}(v)) $

\end{exe}

Call the formula in (19) $F_1$ and the formula in (20) $F_2$. In dialogical logic terms the fact that we do not have enough information neither to answer in a positive fashion nor in a negative fashion to the question (16), means that there is no winning strategy neither for the formula  $F_1\wedge F_2\Rightarrow F_3$ nor for the formula $F_1\wedge F_2\Rightarrow \neg F_3$ where the formula $F_3$ is 

\[F_3= \exists w \exists z (\textit{suédois}(w)\wedge (prix\_Nobel(z)\wedge gagne (w,z)))\]

In general given a sentence $F$ of first order logic there it is not \emph{decidable} whether $F$ is valid. However \emph{in some cases} we can manage this problem. Luckily the present case is one of those we can manage.  
 We consider how a winning strategy for the formula $F_1\wedge F_2\rightarrow F_3$ must look like.
A winning strategy $S$ for this formula will necessarily contain a dialog whose last move $M_n$ is a $\mathbf{P}$-move that asserts  $\textit{suédois}(t)$ for some term $t$ in the language. Since $\textit{suédois}(t)$ is an atomic formula by proposition 3 above $\textit{suédois}(t)$ must occur both as a positive and negative gentzen sub-formula of $F_1\wedge F_2\Rightarrow F_3$ but this is not the case. Thus there is no winning strategy for the latter formula.\\
Let us now discuss why there is no winning strategy for the formula $F_1\wedge F_2\Rightarrow \neg F_3$. First of all proposition 1 assures us that each game won by $\mathbf{P}$ ends by the assertion of some atomic formula and that such assertion is a $\mathbf{P}$-move. By proposition 3 above the only candidate for this is again $\textit{suédois}(t)$ for all term $t$ in the language. If the move $\mathbf{P}$ $M_k$ is an assertion of $\textit{suéduois}(t)$ then it must be an attack. Suppose is not the case then it should be a defence. This means that that there is a formula $F$  of the form $\forall w \textit{suédois}(w)$ or $\exists w (su\acute{e}dois(w)) $ or $F'\vee su\acute{e}dois(w)$ or $F'\wedge \textit{suédois}(w)$ or $F'\Rightarrow su\acute{e}dois(t)$ that $\mathbf{P}$ asserts $F$. This imply that such formula $F$ must be positive Gentzen-subformula of $F_1\wedge F_2\Rightarrow \neg F_3$. But not suchformula exists. Thus the move $M_n$ asserting $su\acute{e}dois(t)$ must be an attack. Since the only formula that can be attacked by this means is the is the formula $su\acute{e}dois(t)\Rightarrow scandinave(t)$.  O can answer back by asserting $scandinave(t)$ thus $\mathbf{P}$ cannot win the game. Thus there is no winning strategy for the formula $F_1\wedge F_2\Rightarrow \neg F_3$

\section{Conclusion}
 
In this paper, we adapted our simple version of argumentative dialogues and strategies of \cite{CPR2017jla} to two-sided sequents (hypotheses and conclusions): this point of view  better matches natural language statements, because  the assumptions sentences of a textual entailment task can be viewed as sequent calculus hypotheses, while the text conclusion can be viewed as the conclusion of the sequent.

In the present paper, we successfully use the syntactic and semantic platform Grail to ``translate'' natural language sentences into DRS that can be viewed as logical formulas. 

This lead us closer to inferentialist semantics: a sentence $S$ can be interpreted as all argumentative dialogues  in natural language whose conclusion is $S$ --- under assumptions corresponding to word meaning and to the speaker beliefs. 

We are presently working to extend our work with semantics taking place in classical first order logic to a broader setting in which semantics takes place in modal logic.  Indeed,   modal reasoning is rather common in natural language argumentation. 

Regarding the architecture of our model of natural language argumentation we would like to encompass lexical meaning as axioms along the lines of \cite{CPR2017jla} and to use hypotheses to model the way the two speakers differ in their expectations,  beliefs and knowledge, taking insights from existing works on functional roles in dialogue modelling.

\bibliographystyle{spmpsci}
\bibliography{ref}

\end{document}